\def\nuz{$\nu${Z}\xspace}
\newtheorem{example}{Example}
\newtheorem{theorem}{Theorem}
\def\wr{:\sim}
\DeclareMathOperator*{\argmax}{arg\,max}
\def \br#1{[{#1}]}
\def \signp#1{{#1}^+}
\def \signm#1{{#1}^-}
\def \signo#1{{#1}^{-1\cdot}}
\def \signoo#1{{#1}^{-1\cdot-1\cdot}}
\def\level#1{{\lambda({#1})}}
\newcommand{\spn}[2]{[{#1}, {#2}]}
\newcommand{\pair}[2]{({#1}, {#2})}
\def \less#1#2{{#1}[{\setminus{#2}}]}
\newtheorem{proposition}[theorem]{Proposition}
\newtheorem{definition}{Definition}
\renewcommand{\vec}[1]{{\bf #1}}
\def\citeb#1{(\citeauthor{#1}, \citeyear{#1})}
\def\citebb#1#2{(\citeauthor{#1}, \citeyear{#1}; \citeauthor{#2}, \citeyear{#2})}
\def\shortcite#1{(\citeyear{#1})}
\def\<{\langle}
\def\>{\rangle}
\def \prec#1{{#1}^\uparrow}
\def\V{\Upsilon}
\def\D{\Delta}
\def \signpp#1{{#1}^{+;+}}
\def \signmm#1{{#1}^{-;-}}
\def\Int{\ensuremath{\mathit{Int}}}
 \def\cS{\ensuremath{\mathcal{Z}}}
 \def\cF{\ensuremath{\mathcal{F}}}
 \def\cP{\ensuremath{\mathcal{P}}}
 \def\cF{\ensuremath{\mathcal{F}}}
 \def\cW{\ensuremath{\mathcal{W}}}
 \def\cL{\ensuremath{\mathcal{L}}}
 \def\cH{\ensuremath{\mathcal{H}}}
\newcommand{\C}{\mathcal{C}}
\newcommand{\ignore}[1]{}
\def\sem{\mathit{sem}}
\def\ar{\leftarrow}
\def\rar{\rightarrow}
 \def\beq{\begin{equation}}
 \def\eeq#1{\label{#1}\end{equation}}
 \def\ba{\begin{array}}
 \def\ea{\end{array}}
\def\clasp{{\sc clasp}\xspace}
\newtheorem{property}{Property}
\newcommand{\At}{\mathit{At}}
\newcommand{\hd}{\mathit{hd}}
\begin{document}

\lefttitle{Yuliya Lierler}

  \title[Unifying Framework for Optimizations]{Unifying Framework for Optimizations in non-boolean Formalisms}
  
  \begin{authgrp}
\author{\gn{Yuliya Lierler}}
\affiliation{University of Nebraska Omaha}
\end{authgrp}

\jnlPage{\pageref{firstpage}}{\pageref{lastpage}}
\jnlDoiYr{2021}
\doival{10.1017/xxxxx}

\maketitle

\begin{abstract}
Search-optimization problems are plentiful in scientific and engineering domains. Artificial intelligence has long contributed to the development of search algorithms and declarative programming languages geared towards solving and modeling search-optimization problems.
Automated reasoning and knowledge representation are the subfields of AI that are particularly vested in these developments.
Many popular automated reasoning paradigms provide users with languages supporting optimization statements. Recall  integer linear programming, MaxSAT, optimization satisfiability modulo theory, (constraint) answer set programming.
These paradigms vary significantly in their languages in ways they express quality conditions on computed solutions. Here we propose a unifying framework of so called extended weight systems that eliminates syntactic distinctions between paradigms. They allow us to see essential similarities and differences between optimization statements provided by distinct automated reasoning languages. We also study formal properties of the proposed systems that immediately translate into formal properties of paradigms that can be captured within our framework. Under consideration in Theory and Practice of Logic Programming (TPLP).
    \end{abstract}




\label{firstpage}

\section{Introduction}
Artificial intelligence is a powerhouse for delivering algorithmic frameworks to support solutions to
search-optimization problems that are plentiful in scientific and engineering domains. 
Automated reasoning and knowledge representation are the subfields of AI that are particularly vested in developing general-purpose search algorithms and declarative programming languages specifically geared towards formulating constraints of search-optimization problems.
Various automated reasoning paradigms provide users with languages supporting optimization statements. Indeed, consider  such popular paradigms as integer linear programming (ILP)~\citeb{pap82}, MaxSAT~\citeb{rob10}, optimization satisfiability modulo theory (OMT)~\citebb{nie06a}{seb12}, answer set programming with weak constraints~\citeb{alv18}, constraint answer set programming (CASP)~\citeb{ban17}.
These paradigms allow a user to express  ``hard'' and ``soft'' constraints given a problem of interest.
Hard part of an encoding for a considered problem is meant to state requirements on what constitutes a solution to this problem. Soft part of the encoding is meant to state optimization criteria based on which we compare resulting solutions and find  optimal ones. For example, integer linear programs have the form
\beq
\ba{ll}
\hbox{{\tt maximize}} & \hbox{{ $\vec{c}^{\vec{T}} \vec{x}$ }}\\
\hbox{{\tt subject to }} & \hbox{{$\vec{A}\vec{x}\leq \vec{b}$ ; ~~}} \vec{x}\geq 0
\hbox{{\tt; and }} \\
&\hbox{{$\vec{x}\in \mathbb{Z}^n$ }}\\
\ea
\eeq{eq:ilp}
where $\vec{c}$, $\vec{b}$ are vectors and $\vec{A}$ is a matrix whose all entries are integers. The {\tt maximize} statement encodes soft constraints, whereas the {\tt subject to} statements encode hard constraints. On the other hand, in partially weighted MaxSAT the statements to encode both hard and soft part have the form of a clause~~
$$
l_1\vee\dots\vee l_n,
$$
where $l_i$ is a literal (recall that a literal is either an atom or its negation, where an atom is a binary/propositional variable). Clauses of the soft part are associated with weights and the goal is to maximize the sum of weights for clauses satisfied by a model of the hard part.   These samples of two distinct automated reasoning paradigms offering optimizations point at clear differences. These paradigms vary significantly in their languages, for example, in ways they express the ``hard constraints'', in ways they express  ``soft constraints'', as well as their vocabularies. Indeed, ILP primarily objects are variables over integers, whereas in MaxSAT we are looking at binary variables. Furthermore, in formalisms such as optimizations modulo theory or constraints answer set programming both kinds of variables are allowed {\em together with intricate interface between these}.  

The  relations between many of the enumerated paradigms have been studied in the absence of ``soft constraints.'' Recently,~\cite{lie21,lie22} provided a formal account comparing MaxSAT/MinSAT family~\citeb{rob10} and answer set programs with weak constraints~\citeb{alv18} (weak constraints are syntactic objects to express soft constraints in logic programs).
In that work,
to draw the precise parallel between these frameworks so called abstract modular weight-systems (w-systems) served the role of a 
 primary tool. These systems abstracted away syntactic differences of the paradigms, while leaving the essential semantic ingredients sufficient to introduce a concept of a model and optimization criteria for their comparison. An abstract notion of a logic introduced by \cite{bre07} is a crucial ingredient of w-systems. This abstract logic may encapsulate languages over binary variables. Hence, w-systems may capture frameworks such as MaxSAT or logic programs with optimizations.

In this work, we extend the concepts of an abstract logic and w-systems to provide us with a framework capable to capture formalisms that utilize distinct kinds of variables in their languages. Then, we show how resulting extended w-systems encapsulate ILP, OMT (in its two common variants), and CASP with optimization statements. We trust that such birds eye view on these distinct paradigms  will boost cross-fertilization between approaches used in design of algorithms supporting optimizations in distinct fields.
Indeed,~\cite{lie21,lie22} illustrated how MaxSAT/MinSAT solvers can be used to compute optimal answer sets for logic programs with weak constraints by providing theoretical basis for cross-translations between formalisms. This work provides theoretical grounds for devising translations for related optimization statements in CASP and OMT. Thus, we pave the way, for example, for an extension of a constraint answer set solver {\sc ezsmt}~\citeb{shen18a}. This solver  relies on utilizing   satisfiability modulo theory 
solvers for finding answer sets of a considered CASP program. In the future, this system can utilize OMT solvers to find optimal answer sets of a CASP program.

We would like to point at work by \cite{alv18b}, where the authors also realized the importance of abstracting away the syntactic details of the formalisms to describing hard and soft constraints of a considered problem in order to streamline the utilization of existing efficient solving techniques in new settings. 
\cite{alv18b} define optimization problems  at a semantic level to present translations of several preference relations into minimize/maximize and subset/supset statements, as implemented in the ASPRIN system developed by~\cite{bre15a} and based on answer set programming technology.

\smallskip
\noindent
{\em Paper outline~~} 
We start the paper by reviewing the concepts of an abstract logic and logic programs. We then define a notion of an extended logic. In Section~\ref{sec:elcassmt}, we show how extended logics naturally capture constraint answer set programs and satisfiability modulo theory formulas (reviewed in the same section). Section~\ref{sec:ewsystems} introduces the central concept of this work: extended weighted modular systems. Then, we use these modular systems to capture a variety of automated reasoning paradigms with optimization statements, namely, several OMT and {\sc clingcon}-based frameworks. In addition, we provide natural generalizations to these frameworks utilizing introduced modular systems. We conclude by enumerating formal properties of these systems and an account of proofs for presented formal results.

\section{Review: Abstract Logic; Logic Programs}
\label{mbms}

A \emph{language} is a set $L$ of \emph{formulas}. A \emph{theory} is 
a subset of~$L$. Thus the set of theories is closed under union and
has the least and the greatest elements: $\emptyset$ and $L$.
This definition ignores any syntactic details behind the concepts of
 a formula and a theory.
A \emph{vocabulary} is an infinite countable set of \emph{atoms}.
Subsets of a vocabulary $\sigma$ represent (classical propositional)
\emph{interpretations} of $\sigma$. We write $\Int(\sigma)$ for the 
family of all interpretations of a vocabulary $\sigma$. 
%

\begin{definition}
A \emph{logic} is a triple $\cL=(L_\cL,\sigma_\cL,\sem_\cL)$, where
\begin{enumerate}
\item $L_\cL$ is a language (the language of the logic $\cL$),
\item $\sigma_\cL$ is a vocabulary (the vocabulary of the logic $\cL$),
\item $\sem_\cL:2^{L_\cL} \rightarrow 2^{\Int(\sigma_\cL)}$ is 
a function assigning collections of interpretations to theories in $L_\cL$
(the \emph{semantics} of $\cL$).
\end{enumerate}
\end{definition}
If a logic $\cL$ is clear from the context, we omit the subscript $\cL$ 
from the notation of the language, the vocabulary and the semantics of 
the logic. 

\emph{Literals} over a vocabulary $\sigma$ are expressions $a$ and $\neg a$, 
where $a$ is an atom from $\sigma$.
A \emph{(logic) rule} over $\sigma$ is of the form 
\begin{equation}\label{e:rule}
\begin{array}{l}
a_0\ar a_1,\dotsc, a_\ell,\ not\  a_{\ell+1},\dotsc,\ not\  a_m,\ 
\end{array}
\end{equation}
where $a_0$ is an atom in $\sigma$ or $\bot$ (empty), and each $a_i$, $1\leq i\leq m$, 
is an atom in $\sigma$.
A \emph{logic program}  over $\sigma$ is a 
set of \emph{rules} over $\sigma$.
The expression $a_0$ is the \emph{head} of the rule. 
The expression on the right hand side of the arrow is the \emph{body}.
We write $\hd(\Pi)$ for the set of nonempty heads of
rules in logic program~$\Pi$.
It is customary for a given vocabulary $\sigma$, to identify a set~$X$ of atoms over $\sigma$ with (i) a complete and consistent set of literals over $\sigma$ constructed as $X\cup\{\neg a \mid a\in\sigma\setminus X\}$, and respectively with (ii)~an assignment function  that assigns the truth value {\em true} to every atom in~$X$ and~{\em false} to every atom in $\sigma\setminus X$. In the sequel, we may refer to sets of atoms as assignments and the other way around following this convention. 
We say that a set~$X$ of atoms {\em satisfies} rule~\eqref{e:rule}, if~$X$ satisfies the propositional formula
$$
a_1\wedge\dotsc\wedge a_\ell\wedge\ \neg  a_{\ell+1}\wedge\dotsc\wedge\ \neg  a_m\rar a_0.\ 
$$
The {\sl reduct}~$\Pi^X$ of a program~$\Pi$ relative to a set~$X$ of atoms is 
obtained by first removing all rules~\eqref{e:rule} such that~$X$ does not satisfy the propositional formula corresponding to the negative part of the body
$\neg  a_{\ell+1} \wedge\ldots\wedge\neg  a_m ,$
and replacing all remaining rules with~~
$$a\ar a_1,\ldots, a_\ell.$$
A set~$X$ of atoms is an {\em answer set}, if it is the minimal set that satisfies all rules of $\Pi^X$~\citeb{lif99d}. 
For example, program
\beq
\ba{l}
a\ar not\ b\\
b\ar not\ a.
\ea
\eeq{ex:slp}
has two answer sets $\{a\}$ and $\{b\}$.

A set~$X$ of atoms from a vocabulary $\sigma$ is an \emph{input answer set} 
of a logic program $\Pi$ over $\sigma$ if $X$ is an answer set of the program 
$\Pi\cup (X\setminus \hd(\Pi))$~\citeb{lt2011}.
For example, if we consider program~\eqref{ex:slp} as a program over vocabulary $\{a,b,c\}$, then program~\eqref{ex:slp} has four input answer sets:  $\{a\}$, $\{b\}$, $\{a,c\}$, $\{b,c\}$.

Brewka and Eiter~\shortcite{bre07} showed that their abstract notion 
of a logic captures default logic, propositional logic, and logic programs 
under the answer set semantics. For example, the logic  $\cL=(L,\sigma,\sem)$,
where
\begin{enumerate}
\item $L$ is the set of propositional formulas over $\sigma$,
\item $\sem(F)$, for a theory $F\subseteq L$, is the set of propositional
models of $F$ over $\sigma$,
\end{enumerate}
captures propositional logic.
We call this logic $\cL$ the {\em pl-logic} and theories (that we later call modules) in the pl-logic, 
\emph{pl-theories/modules}.
If we restrict elements of $L$ to be  clauses, then we call $\cL$  a {\em sat-logic}. 

Similarly, 
a logic  $\cL=(L,\sigma,\sem)$, where
\begin{enumerate}
\item $L$ is the set of logic program rules over $\sigma$,
\item $\sem(\Pi)$, for a program $\Pi\subseteq L$, is the set of answer sets/input answer sets of $\Pi$ over $\sigma$,
\end{enumerate}
captures logic programs under the answer set/input answer set semantics.
We call these logics the  {\em lp-logic} and {\em ilp-logic} respectively and theories/modules in these logics,
\emph{lp-theories/modules} and \emph{ilp-theories/modules}.

\section{Extended Logic for CAS Programs and SMT Formulas}\label{sec:elcassmt}


\paragraph{Review: Constraint Satisfaction Problems and Constraint Answer Set Programs}
A pair~$\spn{V}{D}$, where $V$ is a set of variables and $D$ is a set of values for variables in
$V$ or the \emph{domain} for $V$, is called a {\em specification}. A \emph{constraint} over specification $\spn{V}{D}$ is a 
pair $\langle t,\mathbb{R} \rangle$, where $t$ is a tuple of some (possibly all)
variables from $V$ and $\mathbb{R}$ is a relation on $D$ of the same arity as 
$t$. A collection of constraints over $\spn{V}{D}$ is a \emph{constraint 
satisfaction problem} (CSP) over $\spn{V}{D}$. An {\em evaluation} of
$V$ is a function assigning to every variable in $V$ a value from $D$. 
An evaluation~$\nu$ {\em satisfies} a constraint $\langle (x_1,\ldots,
x_n), \mathbb{R} \rangle$ (or is a {\em solution} of this constraint) if 
$(\nu(x_1),\ldots, \nu(x_n)) \in \mathbb{R}$. An evaluation \emph{satisfies} (or
is a \emph{solution} to) a constraint satisfaction problem if it satisfies 
every constraint of the problem. 
Let $c=\<t,\mathbb{R}\>$ be a constraint and $D$ the domain of its
variables. Let $k$ denote the arity of $t$.
The constraint $\overline{c}=\<t,D^k\setminus \mathbb{R}\>$ is the \emph{complement}
(or \emph{dual}) of $c$. Clearly, an evaluation of variables in~$t$
satisfies $c$ if and only if it does not satisfy $\overline{c}$. 
Frequently, constraints are stated implicitly without a reference to explicit relation. In particular, constraints over integers or reals are often formulated by means of common arithmetic relations.
\begin{example}\label{ex:0}
A constraint $c_1=\langle x, \langle (1), (2) \rangle \rangle$ over specification $s_1=\spn{\{x\}}{\{0,1,2\}}$ can be implicitly represented as inequality $x\geq 1$ or inequality $x\neq 0$.
\end{example}

When we refer to some {\em class of constraints} we assume that all of its members are over the same specification.
We call a function from a vocabulary $\sigma$ to a class $\C$ of constraints a {\em $\pair{\sigma}{\C}$-denotation}.
From now on given a vocabulary $\sigma$ we assume that some of its atoms are marked as {\em irregular/constraint} atoms.
We  utilize subscripts $r$ and $c$ to refer to {\em regular} atoms -- atoms not marked as constraint ones -- and {\em constraint} atoms, respectively. Thus, given vocabulary~$\sigma$: 
$\sigma_r$ forms the subset of $\sigma$  containing all regular atoms and 
    $\sigma_c$ forms the subset of $\sigma$ containing all constraint atoms.

We present the   definition of answer sets for CAS programs as proposed by
Lierler~\shortcite{lier14} using the notation of this paper.
\begin{definition}[CAS Programs and their Answer Sets]
A {\em CAS rule} over vocabulary $\sigma$ is a logic rule~\eqref{e:rule} over $\sigma$,
where $a_0$ is an atom in $\sigma_r$ or $\bot$ (empty), and each $a_i$, $1\leq i\leq m$, is an atom in~$\sigma$.
A {\em  constraint answer set 
program (CAS program)} $P$ over vocabulary $\sigma$, class $\C$ of constraints, and $\pair{\sigma_c}{\C}$-denotation $\gamma$  is a set of
CAS rules over $\sigma$.
Given a CAS program (or logic program or propositional formula) $P$, by $\At(P)$ we denote atoms occurring in $P$.
A set $X\subseteq\At(P)$ 
is an \emph{answer set}
of $P$ if
\begin{itemize}
\item  $X\cap\At(P)_r\subseteq\hd(P)$, 
\item  $X$ is an input answer set of $P$, and 
\item the following CSP has a solution\footnote{Gebser et al.~\shortcite{geb16} proposed a definition for CAS programs that assumes two kinds of irregular/constraint atoms: strict-irregular  and non-strict-irregular. In our presentation here constraint atoms behave as strict-irregular atoms. Changing condition~\eqref{eq:caspcond} to~$\{\gamma(a)\colon a\in X\cap\At(\Pi)_c\}$
 turns the behavior of constraint atoms to non-strict-irregular atoms. 
}
\beq
\ba{l}
\{\gamma(a)\mid a\in X\cap\At(P)_c\}\cup 
\{\overline{\gamma(a)}\mid a\in \At(P)_c\setminus X\}.
\ea
\eeq{eq:caspcond}
\end{itemize}
A pair $(X,\nu)$ 
is an \emph{\underline{extended} answer set} of CAS program $P$ if $X$ is an answer set of $P$ and $\nu$ is a solution to~\eqref{eq:caspcond}.
\end{definition}

\begin{example}\label{ex:casp}
Consider specification $s_1$ and a class $\C_1$ of constraints consisting of constraint $c_1$ and its complement, where  $s_1$ and  $c_1$ are defined in Example~\ref{ex:0}.
Take vocabulary $\sigma_1$ contain two regular atoms $a,b$ and an irregular atom $|x\neq 0|$; and assume
$({\sigma_1}_c,\C_1)$-denotation that maps irregular atom $|x\neq 0|$ to constraint $c_1$. Let $P_1$ be CAS
program over $\sigma_1$, $\C_1$, and $({\sigma_1}_c,\C_1)$-denotation
consisting of rules in~\eqref{ex:slp} and rule~~
$$\ar a, |x\neq 0|.$$
Take $\nu_0$, $\nu_1$, $\nu_2$ be evaluations assigning $x$ to $0$, $1$, and $2$, respectively.
The extended answer sets of $P_1$ are~~ $(\{a\},\nu_0)$,~~ $(\{b\},\nu_0)$,~~ $(\{b,|x\neq 0|\},\nu_1)$,~~ $(\{b,|x\neq 0|\},\nu_2)$.
\end{example}



Lierler and Truszczynski~\shortcite{lt2015} introduced abstract modular systems based of conglomerations of theories over various logics. They illustrated that such systems can be used to capture CAS programs: in particular, they may capture  answer sets of a given CAS program. Here, we introduce an extended notion of a logic so that we may speak of abstract systems capturing the meaning of CAS progams in terms of \underline{extended} answer sets. We then show that the concept of extended logic is helpful in capturing problems expressed as satisfiability modulo theories (SMT).

\paragraph{Extended Logic}
For a vocabulary $\sigma$, set $V$ of variables, and domain $D$, we call a pair $(I,\nu)$, where~$I$ 
is an interpretation over $\sigma$ and $\nu$ is an evaluation from $V$ to $D$, {\em an extended interpretation} over  $\sigma$, $V$, $D$.
 We write $\Int(\sigma,V,D)$ for the 
family of all extended interpretations over  $\sigma$, $V$, $D$. 

\begin{definition}[Extended Logic]
An \emph{extended logic or e-logic} $\cL+$ is a tuple $$(L_{\cL+},\sigma_{\cL+},\D_{\cL+}, \V_{\cL+},\sem_{\cL+}),$$ where
\begin{enumerate}
\item $L_{\cL+}$ is a language 
\item $\sigma_{\cL+}$ is a vocabulary 
\item $\D_{\cL+}$ is a domain -- a set of values -- (the domain of the logic $\cL+$)
\item $\V_{\cL+}$ is a a set of variables over domain $\D_{\cL+}$  (the variables of the logic $\cL+$)
\item $\sem_{\cL+}$: is 
a function assigning collections of extended interpretations $(I,\nu)$ to theories in $L_{\cL+}$, where $(I,\nu)$ is an element in  $\Int(\sigma_{\cL+},\V_{\cL+},\D_{\cL+})$. 
\end{enumerate}

In the sequel, we will default to naming the members of the tuples of  extended logic $\cL+$ as in this definition.
\end{definition}
It is easy to see that an extended logic generalizes the concept of a logic: we can identify any logic $\cL=(L_{\cL},\sigma_{\cL},\sem_{\cL})$ with its extended counterpart
$\cL+=(L_{\cL},\sigma_{\cL},\emptyset, \emptyset,\sem_{\cL})$, where  
$\sem_{\cL}$ in $\cL+$ is identified with a function that maps theories into pairs whose first element is an interpretation of $\sem_{\cL}$ in $\cL$ application and the second element is empty function.

We now illustrate that 
extended logic captures CAS programs under extended answer set semantics. Then, we show how SMT formulas are captured by this formalism.
Indeed, 
provided  class $\C$ of constraints over specification $\spn{V}{D}$ and a $\pair{\sigma_c}{\C}$-denotation,  
the extended logic~$(L,\sigma,D,V,\sem)$,
where
\begin{enumerate}
\item $L$ is the set of CAS rules over vocabulary $\sigma$;
\item $\sem(P)$, for a theory $P\subseteq L$, 
 over class $\C$ of constraints and $\pair{\sigma_c}{\C}$-denotation,  
is the set of extended answer sets of $P$,
\end{enumerate}
captures CAS programs under extended answer set semantics.
We call this logic CAS logic.

\paragraph{Satisfiability Modulo Theories as  Theories in Extended Logic}
Here we state the definition of a Satisfiability Modulo Theories (or SMT) formula~\citeb{BarTin-14} following the lines by \cite{lie17} using terminology introduced here. An alternative name for SMT formulas could have been constraint formulas. 

\begin{definition}[SMT formulas and their Models]
An {\em SMT formula} $\cF$ over vocabulary $\sigma$,   class~$\C$ of constraints and $\pair{\sigma_c}{\C}$-denotation $\gamma$ is a set of propositional formulas (often assumed to be clauses) over $\sigma$. A set $X\subseteq \sigma$ of atoms is 
 a \emph{model}
of~$\cF$, denoted  $X\models\cF$,  if
\begin{itemize}
\item  $X$ is a model of propositional classical logic theory $\cF$, and 
\item the CSP~\eqref{eq:caspcond} with $P$ replaced by $\cF$ has a solution.
\end{itemize}
%
A pair $(X,\nu)$ 
is an \emph{\underline{extended} model}, denoted $(X,\nu)\models \cF$,  if $X$ is model of $\cF$ and $\nu$ is a solution to~\eqref{eq:caspcond} with $P$ replaced by $\cF$.
\end{definition}

SMT formulas  
can be captured by extended logic theories just as CAS programs.  
Provided  class $\C$ of constraints over specification $\spn{V}{D}$ and a $\pair{\sigma_c}{\C}$-denotation,  
the extended logic~$(L,\sigma,D,V,\sem)$,
where
\begin{enumerate}
\item $L$ is the set of propositional formulas over vocabulary $\sigma$;
\item $\sem(\cF)$, for a theory $\cF\subseteq L$
 over class $\C$ of constraints and $\pair{\sigma_c}{\C}$-denotation,  
is the set of extended models of~$\cF$,
\end{enumerate}
captures SMT formulas.
We call this logic SMT-logic (over constraints $\C$).
If we restrict elements of $L$ to be conjunctions of literals 
\beq a_1\wedge\dotsc\wedge a_\ell\wedge\ \neg  a_{\ell+1}\wedge\dotsc\wedge\ \neg  a_m.
\eeq{eq:wcbodyf}
then we call this extended logic a  {\em Restricted SMT or RSMT-logic}. 

\begin{example}\label{ex:smt} Example~\ref{ex:casp} defines $\sigma_1$, $\C_1$, and $({\sigma_1}_c,\C_1)$-denotation considered here.
Let clauses 
$$\{a\vee b,~\neg a,~\neg a\vee \neg |x\neq 0|\}$$
over $\sigma_1$, $\C_1$, and $({\sigma_1}_c,\C_1)$-denotation 
form an SMT-logic theory~$H$. Recall how (i) Example~\ref{ex:casp} is a continuation of Example~\ref{ex:0}, where domain of variable $x$ was specified as $\{0,1,2\}$ and (ii) valuations $\nu_0,\nu_1,\nu_2$ are chosen to assign $x$ to $0$, $1$, $2$, respectively.
There are three extended interpretations that are extended models to the considered SMT-logic theory~$H$:
$$(\{b\},\nu_0),~~ (\{b,|x\neq 0|\},\nu_1),~~ (\{b,|x\neq 0|\},\nu_2).$$

\end{example}

\paragraph{Integer  CSP as Extended Logic}
An {\em integer  expression} 
has the form
$$
b_1 x_1 +\cdots + b_n x_n, 
$$
where $b_1,\dots,b_n$ are integers  $\mathbb{Z}$ and $x_1,\dots,x_n$ are  variables over $\mathbb{Z}$. 
When $b_i=1$ ($1\leq i\leq n$) we may omit it from the expression.
We call a constraint  {\em integer} when it is over specification whose domain is $\mathbb{Z}$ and is encoded implicitly via 
 the form
$
e\bowtie k,$
where $e$ is an integer expression, $k$ is an integer, and $\bowtie$ belongs to $\{<,>,\leq,\geq,=,\neq\}$.  
We call a CSP an {\em integer  constraint satisfaction problem}  when it is composed of integer  constraints. 

Integer  CSPs can be captured by extended logic theories. 
Indeed, an extended logic  $(L,\emptyset,\mathbb{Z},V,\sem)$,
where
\begin{enumerate}
\item $L$ is the set of integer  constraints over $\spn{V}{\mathbb{Z}}$;
\item $\sem(\cF)$, for a theory $\cF\subseteq L$,
is the set of pairs $(\emptyset,\nu)$, where evaluation $\nu$ is a solution to~$\cF$,
\end{enumerate}
captures integer  CSPs.
We call this logic an {\em I-CSP-logic}. If the domain of this extended logic is that of nonnegative integers $\mathbb{Z}^+$ then we call this logic a {\em nonnegative I-CSP-logic}.

\section{Extended Weighted Abstract Modular Systems or EW-Systems}\label{sec:ewsystems}

\cite{lt2015} propose (model-based) abstract modular systems or AMS that allow us to construct heterogeneous systems based of ``modules'' stemming from a variety of logics. 
We now  generalize their framework by incorporating the notion of an extended logic.

\begin{definition}[Extended Modules (or e-modules) and their Models]
Let  $\cL+$
be an extended logic.
 A theory of $\cL+$, that is, a subset of the 
language $L_{\cL+}$ is called an \emph{extended (model-based) $\cL+$-module} 
(or an \emph{e-module}, if the explicit 
reference to its logic is not necessary). 

Let $T$ be an extended $\cL+$-module.
An extended interpretation  $(I,\nu)$ over $\sigma_{\cL+}$, $\D_{\cL+}$, $\V_{\cL+}$
 is an \emph{extended model} of  $T$, whereas $I$ is a {\em model} of $T$
if $(I,\nu)\in\sem_{\cL+}(T)$. 

By 
$L_T$, 
$\sigma_T$, 
$\D_T$,
$\V_T$, and $\sem_T$
we refer to the elements $L_{\cL+},\sigma_{\cL+},\D_{\cL+}, \V_{\cL+},$ and $\sem_{\cL+}$ of module $T$ logic $\cL+$, respectively.
\end{definition}
As before, we use words theory and modules interchangeably. 
Furthermore, for a theory/module in  SMT-logic we often refer to these as  SMT formulas.   For a theory/module in  CAS-logic we  refer to it as a  CAS program.

For an interpretation $I$, by $I_{|\sigma}$ we denote an interpretation over  vocabulary $\sigma$ constructed  from~$I$ by  dropping  all  its members not in $\sigma$. 
For a set $V$ of variables and an evaluation $\nu$ defined over some superset of $V$, by $\nu_{|V}$ we denote an
evaluation over $V$  constructed from $\nu$ so that $\nu_{|V}(v)=\nu(v)$ for any variable $v$ in $V$.
For example, let $V$ be the set of variables $\{x,y\}$, and evaluation $\nu$ defined over $\{x,y,z\}$ assigns~$x$ and~$y$ to~$1$ and variable $z$ to $2$.
Evaluation $\nu_{|V}$ is defined on domain $V$ and assigns both of its variables value~$1$.

We now generalize the notion of an extended model to vocabularies and evaluations that go beyond the one of a considered module in a straight forward manner.  For an e-module $T$ and an extended interpretation $(I,\nu)$
over $\sigma$, $V$, $D$ so that $\sigma_T\subseteq \sigma$, $\V_T\subseteq V$,  $\D_T\subseteq D$,  we say that  
$(I,\nu)$ is an {\em extended model} of $T$, denoted $(I,\nu)\models T$, if $(I_{|{\sigma_T}},\nu_{|\V_T})\in \sem_T$. We can generalize the concept of a model in a similar way. 

We call extended logics
$(L,\sigma,D, V,\sem)$ and
$(L',\sigma',D', V',\sem')$
(and, respectively e-modules in these logics) {\em coherent} if
$D=D'$, whenever $V\cap V'\neq \emptyset$. In other words if  e-logics are coherent and they share variables then the domains of these e-logics  coincide.

\begin{definition}[Extended Abstract Modular Systems (EAMSs) and their models]
A set of coherent e-modules, possibly in different logics, over different
vocabularies and/or variables, is an \emph{extended (model-based) abstract modular system (EAMS)}. 
For an extended abstract modular system $\cH$, 
\begin{itemize}
\item the union of the 
vocabularies of the logics of the modules in $\cH$ forms the \emph{vocabulary} 
of~$\cH$, 
denoted by $\sigma_\cH$, 
\item  the union of the 
variables of the logics of the modules in $\cH$ forms the  \emph{set of variables} 
of~$\cH$, 
denoted by $\V_\cH$,  
\item the union of the 
domains of the logics of the modules in $\cH$ forms the  \emph{domain} 
of~$\cH$, 
denoted by $\D_\cH$.
\end{itemize}
An extended interpretation $(I,\nu)$
over $\sigma_\cH$, $\V_\cH$, $\D_\cH$ 
is an \emph{extended model} of $\cH$ whereas $I$ is a {\em model} of $\cH$, when for every module $B\in\cH$,  $(I,\nu)$ is an extended model of $B$. 
\end{definition}
When an EAMS consists of a single module $\{F\}$ we  identify it with module $F$ itself.
Just as  the concept of an extended logic presented here is a generalization of logic by Brewka and Eiter, the concept of EAMS  is a generalization of AMS by Lierler and Truszczynski~\shortcite{lt2015}.

\paragraph{Extended W-systems}
In practice, we are frequently interested not only in identifying models of a given logical formulation of a problem (hard fragment), but also identifying models that are deemed optimal according to some criteria (soft fragment). Frequently, multi-level optimizations are of interest. 
Lierler and Truszczynski~\shortcite{lt2015} argued how AMS and, consequently, EAMS as its generalization are geared towards capturing heterogeneous solutions for formulating hard constraints. 
Lierler~\shortcite{lie21} used abstract modular systems to formulate a concept of  w-systems that enable   soft constraints.  W-systems are adequate to capture the MaxSAT/MinSAT family of problems~\citeb{rob10} as well as logic programs with  weak constraints~\citeb{alv18}. W-systems  provide us with means of studying these distinct logic frameworks under a unified viewpoint.  

\begin{definition}[Ew-conditions and their models]
An \emph{ ew-condition} in an extended logic $\cL+$  is
 a pair $(T,w;\vec{c}@l)$ --- 
  consisting of an $\cL+$ e-module $T$    and an expression $w;\vec{c}@l$, where  
  \begin{itemize}
  \item $w$ is an integer, 
  \item \vec{c} is a function from variables in $\V_T$ to reals, and
  \item $l$ is a positive integer.  
  \end{itemize}
We refer to integers $l$ and $w$ as {\em levels} and {\em weights}, respectively.  We refer to function $\vec{c}$ as {\em coefficients} function. 

Let $B$ be an ew-condition $(T,w;\vec{c}@l)$.
Intuitively, by 
$\sigma_B$, 
$\D_B$, and
$\V_B$
we refer to
$\sigma_T$, 
$\D_T$, 
and~$\V_T$,  respectively.
Let $(I,\nu)\in \Int(\sigma_T,\V_T,\D_T)$ be 
an extended interpretation; it  
is an \emph{extended model} of $B$, denoted $(I,\nu)\models B$, when 
  $(I,\nu)$ is an extended model of $T$, also in this case $I$ is called a {\em model}, denoted $I\models B$.

The notion of an (extended) model for ew-condition is generalized to vocabularies and evaluations that go beyond the one of a considered ew-condition in a straight forward manner (as it was done earlier for the case of e-modules). For an extended interpretation $(I,\nu)$
over $\sigma$,~$V$,~$D$ so that $\sigma_B\subseteq \sigma$,~$\V_B\subseteq V$,~$\D_B\subseteq D$,  we say that  
$(I,\nu)$ is an {\em extended model} of ew-condition $B$, denoted $(I,\nu)\models B$, if $(I_{|{\sigma_T}},\nu_{|\V_T})\models B$. The concept of a model is generalized in a similar way. 
\end{definition}

Intuitively, the role of  weights $w$ in ew-condition $B=(T,w;\vec{c}@l)$ is to distinguish the quality of models/extended models of $B$ given their propositional part;   the role of the coefficient $\vec{c}$ is to distinguish the quality of extended models of $B$ given their evaluation  part. These intuitions become more apparent  in the next definition,  where we associate ``cost'' expressions with the models of ew-conditions. We elaborate more on these expressions after their definitions.

\begin{definition}[Cost expressions for (extended) interpretations of ew-conditions]
Ler $B$ be an ew-condition.

For an extended interpretation~$(I,\nu)$
over $\sigma$,~$V$,~$D$ so that $\sigma_B\subseteq \sigma$,~$\V_B\subseteq V$,~$\D_B\subseteq D$,
a mapping $\br{(I,\nu)\models B}$ is defined as 
\beq 
\br{(I,\nu)\models B}=\begin{cases}
  \displaystyle{\sum_{x\in \V_B}{\nu(x)\cdot \vec{c}(x)}}  &\hbox{when $(I,\nu)\models B$}  \\
  0&\hbox{otherwise.}  
\end{cases}
\eeq{eq:isatext}

For an interpretation~$I$
over $\sigma$ so that $\sigma_B\subseteq \sigma$, 
a mapping $\br{I\models B}$ is defined as follows
\beq \br{I\models B}=\begin{cases}
  w&\hbox{when $I\models B$, }  \\
  0&\hbox{otherwise.}  
\end{cases}
\eeq{eq:isat}

\end{definition}
\vspace{2mm}

We view expressions  $\br{(I,\nu)\models B}$ and $\br{I\models B}$ as costs  associated with two distinct parts of an (extended) interpretation of  a considered ew-condition~$B$. Indeed, the former expression accounts for  {\em ``the quality'' of  an evaluation} associated with an extended interpretation and utilizes the coefficient of ew-condition $B$ to compute that. The later expression accounts for {\em``the quality'' of ``logical/propositional part''} of an  interpretation by considering the weight of  ew-condition $B$.  
It is easy to see that non-zero values for costs are associated only with (extended) interpretations that are also models.
These cost expressions are used in formulating the definitions of \underline{optimal} (extended) models of  ew-systems defined next. These ew-systems take ew-conditions as the tool for distinguishing quality of their (extended) models. 
Formulas~\eqref{eq:isatext} and \eqref{eq:isat} are used within expressions 
\eqref{eq:condeqlmin} and \eqref{eq:condeqlmin2} utilized in the definitions of optimal models and optimal extended models of ew-systems, respectively.
In the sequel, 
Examples~\ref{ex:wsmt},~\ref{ex:ewsmt},~\ref{ex:omt} illustrate the use of instances of cost expressions~\eqref{eq:isatext} and \eqref{eq:isat} within formulas~\eqref{eq:condeqlmin} and \eqref{eq:condeqlmin2} .

Before introducing the key concept of this paper --  ew-systems -- we present a number of useful abbreviations.
We identify ew-conditions of the form 
\begin{itemize}
\item $(T,w;\vec{c}@1)$ with expressions $(T,w;\vec{c})$: i.e., when the level is missing it is considered to be~$1$. 
\item $(T,w;\emptyset@l)$ with expressions $(T,w@l)$: i.e., when the coefficients function is empty. 
\item $(T,w;\vec{c}@l)$ with expressions $(T,w@l)$, when the coefficients function $\vec{c}$ assigns $0$ to every element in its domain. 
\end{itemize}
For example, $(T,w)$ stands for ew-condition, whose level is~$1$ and whose coefficients function is either empty or assigns~$0$ to every element in its domain. 

For a collection $\cS$ of ew-conditions,  its 
\emph{vocabulary}, 
denoted by $\sigma_\cS$, its  \emph{set of variables}, 
denoted by $\V_\cS$,  its  \emph{domain}, 
denoted by $\D_\cS$ is defined following the lines of these concepts for EAMS. 
We say that ew-condition $(T,w;\vec{c}@l)$ is {\em coherent} with an EAMS $\cH$ if 
\begin{itemize}
    \item $\sigma_T\subseteq\sigma_\cH$,
    \item $\V_T\subseteq \V_\cH$, 
    \item given any module $H$ in $\cH$, 
    $\D_T=\D_H$ whenever $\V_T\cap \V_H\neq \emptyset$.
\end{itemize}

\begin{definition}[Ew-systems and their models]\label{def:omewsys}\nopagebreak 
A pair $(\cH,\cS)$ consisting of an EAMS $\cH$ and a set $\cS$ of ew-conditions so that every element in~$\cS$ is coherent with $\cH$
is called an {\em ew-system} ($\cH$ and $\cS$ intuitively stand for {\em hard} and {\em soft}, respectively).

Let $\cW=(\cH,\cS)$ be
an ew-system. The 
vocabulary of~$\cH$ forms the \emph{vocabulary} 
of $\cW$, denoted by $\sigma_\cW$. Similarly, $\V_\cW=\V_\cH$ and $\D_\cW=\D_\cH$.
An extended interpretation $(I,\nu)$ in $\Int(\sigma_\cW, \V_\cW, \D_\cW)$
is an \emph{extended model} of $\cW$, whereas $I$ is a {\em model}, if $(I,\nu)$ is an extended model of~$\cH$.
\end{definition}

For a level~$l$, by $\cW_l$ we denote the subset of $\cS$ that includes all ew-conditions whose level is~$l$.
By~$\level{\cW}$, we denote the set $
\{l\mid
 (T,w;\vec{c}@l)\in \cS\}$ of all levels associated with ew-system $\cW$.
For a level $l\in \level{\cW}$ by $\prec{l}$ we denote the least level in $\level{\cW}$ that is greater than~$l$ (it is obvious that for the greatest level in $\level{\cW}$,  $\prec{l}$ is undefined). For example, for levels in~$\{2, 6, 8, 9\}$,
$\prec{2}=6$, $\prec{6}=8$, and~$\prec{8}=9$.

\begin{samepage}
\begin{definition}[Optimal models of ew-systems]\label{def:oemewsys}\nopagebreak 
For  level $l\in\level{\cW}$, a model $I^*$ of ew-system $\cW$ is {\em $l$-optimal} if 
$I^*$ satisfies equation
\beq
I^*=\displaystyle{ arg\max_{I} {\sum_{B\in\cW_l}{ \br{I\models B}}}
},
\eeq{eq:condeqlmin}
where
\begin{itemize}
    \item $I$ ranges over models of $\cW$ if $l$ is the greatest level in $\level{\cW}$, 
\item $I$ ranges over $\prec{l}$-optimal models of $\cW$, otherwise.
\end{itemize}
We call a model  {\em $l$-min-optimal} if max is replaced by min in~\eqref{eq:condeqlmin} (and occurrences of word {\em optimal} are replaced by {\em min-optimal} in the definition; we drop this remark from the later similar definitions). 

A model $I^*$ of $\cW$ is {\em optimal} if $I^*$ is $l$-optimal model for every level $l\in\level{\cW}$.
A model $I^*$ of~$\cW$ is {\em min-optimal} if $I^*$ is $l$-min-optimal model for every level $l\in\level{\cW}$.
\end{definition}
\end{samepage}

We now provide intuitions for sub-expression 
\beq {\sum_{B\in\cW_l}{ \br{I\models B}}}
\eeq{eq:cost1}
of the formula~\eqref{eq:condeqlmin}. Given an interpretation $I$, formula~\eqref{eq:cost1} can be seen as a cost of this interpretation with respect to level $l$ of ew-system $\cW$.
This cost is computed by summing all the weights of the ew-conditions of $\cW_l$ for which this interpretation $I$ is a model. 
When ew-system $\cW$ contains only ew-conditions of a single level then  formula~\eqref{eq:cost1} equips us with the cost of the considered interpretation with respect to the overall system. 

\begin{definition}[Extended Optimal Models of Ew-systems]\label{def:oemews}
For  level $l\in\level{\cW}$, an \underline{extended model} $(I^*,\nu^*)$ of ew-system~$\cW$ is {\em l-optimal} if 
$(I^*,\nu^*)$ satisfies equation
\beq
(I^*,\nu^*)=
\displaystyle{ arg\max_{(I^*,\nu^*)} {
\sum_{B\in\cW_l}{ (\br{I\models B} + \br{(I,\nu)\models B})}}
},
\eeq{eq:condeqlmin2}
where
\begin{itemize}
    \item $(I,\nu)$ ranges over extended models of $\cW$ if $l$ is the greatest level in $\level{\cW}$, 
\item $(I,\nu)$ ranges over $\prec{l}$-optimal extended models of $\cW$, otherwise.
\end{itemize}
We call a model  {\em $l$-min-optimal} if max is replaced by min in the equation above.

An extended model $(I^*,\nu^*)$ of $\cW$ is {\em optimal} if $(I^*,\nu^*)$ is $l$-optimal model for every level $l$ in~$\level{\cW}$.
An extended model $(I^*,\nu^*)$ of $\cW$ is {\em min-optimal} if $(I^*,\nu^*)$ is $l$-min-optimal model for every level~$l$ in~$\level{\cW}$.
\end{definition}

We now provide intuitions for sub-expression 
\beq
\sum_{B\in\cW_l}{ (\br{I\models B} + \br{(I,\nu)\models B})}
\eeq{eq:cost2}
of the formula~\eqref{eq:condeqlmin2}. Given an extended interpretation $(I,\nu)$, formula~\eqref{eq:cost2} can be seen as a cost of this interpretation with respect to level $l$ of ew-system $\cW$.
This cost is computed by 
considering  all the 
 ew-conditions of $\cW_l$ for which this extended interpretation 
is a model and 
summing all their weights 
together with the values provided by linear expression formed within cost expression provided at the first line of~\eqref{eq:isatext}. 
When ew-system $\cW$ contains only ew-conditions of a single level then  formula~\eqref{eq:cost2} equips us with the cost of the considered extended interpretation with respect to the overall system.

If we compare the notion of a optimal/min-optimal model versus a optimal/min-optimal extended model then the former does not take into account the numeric values associated with evaluations corresponding to extended models associated with the considered model; whereas the latter combines the quality of both parts of extended model.

\cite{lie22} noted that the definition of optimal models in terms of ``$arg\max$''-equation comes from the traditions of literature related to MaxSAT problem.
In answer set (logic) programming community, the conditions on optimality of answer sets is stated in terms of ``domination'' relation. 
Here we follow the steps by~\cite{lie22} and provide an alternative definition to optimal models of ew-systems in terms of domination.

\begin{definition}[Optimal models of ew-systems]\label{def:optimalmodelewsysASP}
Let $I$ and $I'$ be models of ew-system $\cW$.
Model $I'$ {\em min-dominates}  $I$ if 
there exists a level $l\in\level{\cW}$ such that
following conditions are satisfied:
\begin{enumerate}
\item for any level $l'>l$  the following equality holds
$$
\displaystyle{ \sum_{B\in\cW_{l'}}{ \br{I\models B}}}
=
\displaystyle{ \sum_{B\in\cW_{l'}}{ \br{I'\models B}}}
$$
\item the following inequality holds for level $l$
$$
\displaystyle{ \sum_{B\in\cW_l}{ \br{I'\models B}}}
<
\displaystyle{ \sum_{B\in\cW_l}{ \br{I\models B}}}
$$ 
\end{enumerate}
Model $I'$ {\em max-dominates}  $I$ if 
we change less-than symbol by greater-than symbol in the inequality of Condition~\ref{l:cond2}.

A model $I^*$ of $\cW$ is {\em optimal}  if there is no model~$I'$ of $\cW$ that max-dominates $I^*$. 
A model $I^*$ of $\cW$ is {\em min-optimal}  if there is no model~$I'$ of $\cW$ that min-dominates $I^*$. 
\end{definition}

\begin{definition}[Optimal extended models of ew-systems]\label{def:optimalemodelewsysASP}
Let $(I,\nu)$ and $(I',\nu')$ be extended models of ew-system $\cW$.
Extended model $(I',\nu')$ {\em min-dominates}  $(I,\nu)$ if 
there exists a level $l\in\level{\cW}$ such that
following conditions are satisfied:
\begin{enumerate}
\item for any level $l'>l$  the following equality holds
$$
\displaystyle{ 
{\sum_{B\in\cW_{l'}}{ (\br{I\models B} + \br{(I,\nu)\models B})}}
}
=
\displaystyle{ 
{\sum_{B\in\cW_{l'}}{ (\br{I'\models B} + \br{(I',\nu')\models B})}}
}
$$
\item the following inequality holds for level $l$
$$
\displaystyle{ 
{\sum_{B\in\cW_{l}}{ (\br{I'\models B} + \br{(I',\nu')\models B})}}
}
<
\displaystyle{ 
{\sum_{B\in\cW_{l}}{ (\br{I\models B} + \br{(I,\nu)\models B})}}
}
$$ 
\end{enumerate}
Extended model $(I',\nu')$ {\em max-dominates}  $(I,\nu)$ if 
we change less-than symbol by greater-than symbol in the inequality of Condition~\ref{l:cond2}.

An extended model $(I^*,\nu^*)$ of $\cW$ is {\em optimal}  if there is no extended model~$(I',\nu')$ of $\cW$ that max-dominates $(I^*,\nu^*)$. 
An extended model $(I^*,\nu^*)$ of $\cW$ is {\em min-optimal}  if there is no extended model~$(I',\nu')$ of $\cW$ that min-dominates $(I^*,\nu^*)$. 
\end{definition}

\begin{proposition}\label{prop:eqdefs}
Definitions~\ref{def:oemewsys} and~\ref{def:optimalmodelewsysASP} are equivalent.
Definitions~\ref{def:oemews} and~\ref{def:optimalemodelewsysASP} are equivalent.
\end{proposition}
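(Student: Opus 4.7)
My plan is to reduce both equivalences to a single downward induction on levels. The extended case (Definitions~\ref{def:oemews} and~\ref{def:optimalemodelewsysASP}) replaces $\br{I\models B}$ throughout by the combined cost $\br{I\models B}+\br{(I,\nu)\models B}$ and quantifies over extended interpretations instead of interpretations; apart from this relabelling the argument is identical, so I focus on Definitions~\ref{def:oemewsys} and~\ref{def:optimalmodelewsysASP}. Likewise, the min-optimal variants differ from the max/optimal case only by swapping $>$ with $<$, so it suffices to treat the optimal/max-dominates case. Abbreviate $f_{l}(I)=\sum_{B\in\cW_{l}}\br{I\models B}$ and note that $\level{\cW}$ is a subset of the positive integers, hence well-ordered, so downward induction from the greatest element of $\level{\cW}$ is available.

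The technical core is the following claim, proved by downward induction on $l\in\level{\cW}$: whenever $I$ and $I'$ are both $l$-optimal, $f_{l'}(I)=f_{l'}(I')$ for every level $l'\geq l$. The base case is when $l$ is the greatest level of $\level{\cW}$, where $l$-optimality means being an argmax of $f_l$ over all models, so $f_l(I)=f_l(I')$ is immediate. For the inductive step, $l$-optimality implies $\prec{l}$-optimality by Definition~\ref{def:oemewsys}; the induction hypothesis then yields $f_{l'}(I)=f_{l'}(I')$ for every $l'>l$, and $f_l(I)=f_l(I')$ follows because both models realise the argmax of $f_l$ inside the same set of $\prec{l}$-optimal models.

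For Definition~\ref{def:oemewsys}~$\Rightarrow$~Definition~\ref{def:optimalmodelewsysASP}, suppose $I^*$ is optimal and some $I'$ max-dominates $I^*$ at witness level $l$. Using the equalities $f_{l'}(I^*)=f_{l'}(I')$ for $l'>l$, I descend through the levels from the top and show by induction that $I'$ is $\prec{l}$-optimal: at each step $I'$ agrees with $I^*$ at the current level and therefore lies in the same argmax set. The strict inequality $f_l(I')>f_l(I^*)$ with both $I^*$ and $I'$ being $\prec{l}$-optimal then contradicts $I^*$ being $l$-optimal. Conversely, if $I^*$ is not optimal per Definition~\ref{def:oemewsys}, let $l$ be the largest level at which $I^*$ fails to be $l$-optimal; then $I^*$ is $\prec{l}$-optimal (or $l$ is the greatest level, handled analogously), and some $\prec{l}$-optimal $I'$ satisfies $f_l(I')>f_l(I^*)$. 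Applying the key claim to the two $\prec{l}$-optimal models $I^*$ and $I'$ gives $f_{l'}(I^*)=f_{l'}(I')$ for all $l'>l$, so $I'$ max-dominates $I^*$, contradicting Definition~\ref{def:optimalmodelewsysASP}. The main obstacle is the key claim of paragraph two; both directions of the equivalence are then bookkeeping on top of it, and the extended case follows by the same argument with the combined cost and extended interpretations.
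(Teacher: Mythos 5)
Your argument is correct, and it follows essentially the same route as the paper: a downward induction on the levels of $\cW$ arranged in descending order, with the pivotal observation that any two $l$-optimal models agree on the level-$l'$ cost sums for all $l'\geq l$ (this plays the role of the auxiliary lemmas the paper imports from its predecessor work on w-systems). The only difference is that you spell the induction out in full where the paper defers to the earlier proof, and the extended/min-optimal cases are indeed obtained by the substitutions you describe.
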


\section{Instances of Ew-Systems}

\subsection{MaxSMT-family and Optimization Modulo Theories}\label{sec:5.1} 
\cite{lie21,lie22} illustrated the utility of w-systems --- a precursor of ew-systems ---
by using these to capture the definitions of {\em MaxSAT}, {\em weighted MaxSAT}, and {\em partially weighted MaxSAT} (or, {\em pw-MaxSAT})~\citeb{rob10}. Here we look into capturing  {\em weighted MaxSMT} and {\em pw-MaxSMT}.

We start by introducing some useful notation.
For 
a vocabulary $\sigma$, a specification  $\spn{V}{D}$,
and an e-logic $\cL+$ so that \mbox{$\sigma_{\cL+}=\sigma$},
\mbox{$\D_{\cL+}=D$}, \mbox{$\V_{\cL+}=V$}, 
an e-module $T_{\cL+}$ is called
{\em $\sigma,V,D$-theory/$\sigma,V,D$-module}  when 
\mbox{$sem(T_{\cL+})=Int(\sigma,V,D)$}.
In other words, any extended interpretation in $Int(\sigma,V,D)$ is an extended model of a~$\sigma,V,D$-theory, 
and, consequently, any interpretation in $Int(\sigma)$ is a model of a~$\sigma,V,D$-theory.
Note how specific language of e-logic of  
a $\sigma,V,D$-theory becomes immaterial.
Thus, we allow ourselves to denote an arbitrary  $\sigma,V,D$-theory by
 $T_{\sigma,V,D}$ disregarding the reference to its e-logic. Also recall that the (extended) interpretations/models of any theory are generalized to signatures that go beyond original signature of the considered theory.


\begin{definition}[Weighted MaxSMT problems and their solutions]\label{def:solwmsmt}
A {\em weighted MaxSMT problem}~\citeb{nie06a} over vocabulary $\sigma$, class $\C$ of constraints and $\pair{\sigma_c}{\C}$-denotation 
is defined as a set $S$ of pairs $\pair{\cF}{w}$, where~$\cF$ is an SMT formula over $\sigma$, $\C$, and 
$\pair{\sigma_c}{\C}$-denotation so that $\cF$ is a clause, and $w$ is a positive integer\footnote{\cite{nie06a} allow 
$w$  to be a positive real number.}. 
An interpretation $I^*\in Int(\sigma)$  is a  {\em solution} to weighted MaxSMT problem $S$, when it satisfies the equation 
$$\displaystyle{I^*=\argmax_{I}{\sum_{(\cF,w)\in S}{w\cdot \br{I\models \cF}}}},\hbox{ where}\\
$$
\beq
\br{I\models \cF}=\begin{cases}
  \displaystyle{1}  &\hbox{when $I\models \cF$}  \\
  0&\hbox{otherwise}  
\end{cases}
\eeq{eq:smteq}
and $I$ ranges over all interpretations in $Int(\sigma)$.
\end{definition}

\begin{example}\label{ex:wsmt}
Consider specification $s_1$, class $\C_1$ of constraints consisting of  $c_1$ and its complement,
vocabulary~$\sigma_1$, and 
$({\sigma_1}_c,\C_1)$-denotation from Example~\ref{ex:casp}.
Set
$$\{(a\vee b,2), (\neg a,3), (\neg a\vee \neg|x\neq 0|,1)\}$$
exemplifies a weighted MaxSMT problem. Its solutions are
$\{b\}$ and $\{b,|x\neq 0|\}$.
\end{example}

\begin{proposition}\label{prop:wmaxsmt}
Let $S$ be a weighted MaxSMT problem.
The optimal models of ew-system $(T_{\sigma_S,\V_S,\D_S},S)$ ---
where each element in $S$ is understood as an  ew-condition, whose theory is in SMT-logic ---
 form the set of solutions for weighted MaxSMT problem~$S$. 
\end{proposition}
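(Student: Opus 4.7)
\noindent\textbf{Proof plan for Proposition~\ref{prop:wmaxsmt}.}
The plan is to unfold the definitions on both sides and show that the two $\argmax$ equations coincide term-by-term. First I would pin down the shape of the ew-system. By the abbreviation conventions introduced just after the definition of ew-conditions, each element $(\cF,w)\in S$, viewed as an ew-condition, stands for $(\cF,w;\emptyset@1)$; in particular every ew-condition in $S$ has level $1$ and empty coefficient function. Hence $\level{(\cH,\cS)}=\{1\}$, there is a single level to consider, and Definition~\ref{def:oemewsys} collapses to one application of equation~\eqref{eq:condeqlmin}.

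Second, I would identify the range of $I$ in that $\argmax$. By construction of $T_{\sigma_S,\V_S,\D_S}$, every extended interpretation over $\sigma_S,\V_S,\D_S$ is an extended model, so every interpretation in $\Int(\sigma_S)$ is a model of the hard part. Therefore the set of models of the ew-system is exactly $\Int(\sigma_S)$, which is precisely the set over which $I$ ranges in Definition~\ref{def:solwmsmt}.

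Third, I would compute the objective. For an ew-condition $B=(\cF,w;\emptyset@1)$ and $I\in\Int(\sigma_S)$, formula~\eqref{eq:isat} gives $\br{I\models B}=w$ when $I\models \cF$ and $0$ otherwise, which equals $w\cdot\br{I\models \cF}$ as defined in~\eqref{eq:smteq}. Summing over $B\in\cW_1$ yields
\[
\sum_{B\in\cW_1}\br{I\models B}\;=\;\sum_{(\cF,w)\in S} w\cdot\br{I\models \cF},
\]
so the $\argmax$ in~\eqref{eq:condeqlmin} coincides with the $\argmax$ in Definition~\ref{def:solwmsmt}. Thus $I^{*}$ is an optimal model of $(T_{\sigma_S,\V_S,\D_S},S)$ iff $I^{*}$ is a solution of the weighted MaxSMT problem~$S$.

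The step that needs a bit of care, and which I take to be the only real obstacle, is the semantic alignment between ``$I\models \cF$'' as used informally in the MaxSMT definition and ``$I$ is a model of the SMT-logic theory $\{\cF\}$'' as used in the ew-system. For the latter, by the SMT-logic semantics, $I$ is a model precisely when some evaluation $\nu$ makes $(I,\nu)$ an extended model of $\cF$, i.e.\ when the CSP~\eqref{eq:caspcond} is satisfiable and $I$ satisfies $\cF$ propositionally --- which is the standard meaning of $I\models\cF$ in the MaxSMT literature. Once this identification is made, the three steps above assemble into the proof with no further calculation.
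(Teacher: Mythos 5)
Your proposal is correct and follows essentially the same route as the paper's own proof: both unfold Definition~\ref{def:solwmsmt} and Definition~\ref{def:oemewsys}, observe that $T_{\sigma_S,\V_S,\D_S}$ makes every interpretation in $\Int(\sigma_S)$ a model so the two $\argmax$ expressions range over the same set, and then check via~\eqref{eq:isat} and~\eqref{eq:smteq} that the two objective sums agree term-by-term. Your extra remarks on the single-level collapse and on the meaning of $I\models\cF$ are sound clarifications of steps the paper leaves implicit, not a different argument.
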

Proposition~\ref{prop:wmaxsmt} allows us to view ew-systems as a mean to  state  definitions of semantics of various formalisms captured by e-logics  with optimizations in a uniform way. Indeed, we can formulate a definition of semantics of weighted MaxSMT problem by means of its relation to a respective ew-system: {\em 
For a {weighted MaxSMT problem} $S$
over vocabulary $\sigma$, class $\C$ of constraints and $\pair{\sigma_c}{\C}$-denotation,  
an optimal model of ew-system $(T_{\sigma_S,\V_S,\D_S},S)$ is a {\em solution} to~$S$.}
In the sequel, we often take this approach to present the definitions of various optimization formalisms and their possible extensions. This allows us to bypass the complexity of varying terminology and notation coming from papers that introduce these formalisms. It is also worth to reiterate that we speak of papers stemming from various research communities that have their own traditions.

Note how in ew-system $(T_{\sigma_S,\V_S,\D_S},S)$, where $S$ is a weighted MaxSMT problem, 
\begin{itemize}
    \item levels of all ew-conditions are identical (set to 1) and 
    \item the coefficients function assigns all variables to $0$. 
\end{itemize}
These observations lend themselves to a natural generalization of weighted MaxSMT problem.
\begin{definition}[Generalized weighted MaxSMT problems and their (extended) solutions]
A {\em generalized weighted MaxSMT problem} over class $\C$ of constraints is defined as a set $S$  of  ew-conditions in SMT-logic over $\C$. Optimal models and optimal extended models of ew-system $(T_{\sigma_S,\V_S,\D_S},S)$ form {\em solutions} and {\em extended solutions} of $S$. 
\end{definition}
Note how extended solutions of $S$ take into account the quality of evaluation that is part of extended model. Also, levels become naturally incorporated into the framework. 


\begin{example}\label{ex:ewsmt}
Let us continue Example~\ref{ex:wsmt}. 
Let $f_x$ be a coefficients function defined as $f_x(0)=10$, $f_x(1)=100$, $f_x(2)=1000$.
The following set~~ 
$\{(a\vee b,2), (\neg a,3), (\neg a\vee \neg|x\neq 0|,1;f_x)\}
$~~
over $\sigma_1$, $\C_1$, and $({\sigma_1}_c,\C_1)$-denotation forms 
 a generalized weighted MaxSMT problem. Its solutions are
$\{b\}$ and $\{b,|x\neq 0|\}$. Its extended solution is $(\{b,|x\neq 0|\},\nu_2)$, where $\nu_2$ is introduced in Example~\ref{ex:casp}. If we redefine $f_x$ as follows $f_x(0)=1000$, $f_x(1)=100$, $f_x(2)=10$, then the extended solution to considered problem is  $(\{b\},\nu_0)$.

\end{example}

We now  define/generalize partially weighted MaxSMT (pw-MaxSMT) problem inspired by  the notion of  partially weighted MaxSAT problem~\citeb{rob10}.
\begin{definition}[Generalized partially weighted MaxSMT problems and their (extended) solutions]
A {\em generalized partially weighted MaxSMT (or gpw-MaxSMT) problem} over vocabulary $\sigma$, class $\C$ of constraints over specification $\spn{V}{D}$ and $\pair{\sigma_c}{\C}$-denotation  is defined as ew-system  $(H,S)$, where $H$ is a theory in SMT-logic over  $\C$
 and $S$ is a collection of ew-conditions whose theories are in SMT-logic over $\C$.
Formula~$H$ is referred to as {\em hard} problem fragment, whereas  $S$ forms {\em soft} problem fragment.
 Optimal models and optimal extended models of ew-system $(H,S)$ form {\em solutions} and {\em extended solutions} of~$(H,S)$.
\end{definition}

\subsubsection{\nuz approach}
As mentioned in the introduction, often the complete spectrum of capabilities of automated reasoning systems is described in papers and tutorials by means of examples appealing to intuitions of the readers. We argue that the presented framework provides convenient means to make such presentations formal.
For example, \cite{nuz}, the authors of SMT solver~\nuz, state
that 
\begin{quote}
\nuz extends the functionality of Z3~\citeb{mou08} to include optimization objectives. It allows
users to solve SMT constraints and at the same time formulate optimality criteria
for the solutions. \dots    
\end{quote}
 They then specify that \nuz adds to SMT-LIB~\citebb{BarFT-SMTLIB}{BarST-SMT-10} -- standard descriptions/language of background theories used in SMT systems -- 
a command of the form  

{\tt (assert-soft F [:weight n])}. 

\noindent
This command is said to assert soft constraint $F$, optionally with an integer weight $n$.\footnote{The authors also allow a keyword for decimal weights, yet at the time of writing this paper the system z3-4.8.12, the latest available incarnation of z3 with the functionality of \nuz gave out an error message "invalid keyword argument" suggesting that this feature is no longer supported.} If no
weight is given, the default weight is $1$. 
Code similar to the one presented in  LHS of Figure~\ref{fig:fig1} and the respective output of system \nuz presented in RHS of the figure are used by~\cite{nuz} to 
illustrate a use of soft constraints.
\begin{figure}[h]
\small
    \centering
    \begin{multicols}{2}
    \begin{verbatim}
LHS:
(declare-fun x () Int)
(declare-fun y () Int)
(define-fun a1 () Bool (> x 0))
(define-fun a2 () Bool (< x y))
(assert (=> a2 a1))
(assert-soft a2 :weight 3)
(assert-soft (not a1) :weight 5)
(check-sat)
(get-model)


RHS:
sat
 ((define-fun a1 () Bool
    (not (<= x 0)))
  (define-fun x () Int
    0)
  (define-fun a2 () Bool
    (not (<= y x)))
  (define-fun y () Int
    0))
    \end{verbatim}
    \end{multicols}
    \normalsize
    \caption{LHS: SMT-LIB code suggesting to maximize $3 \cdot a2 + 5 \cdot \overline{a1}$.  RHS: \nuz finds
a solution where $x = y= 0$. }
    \label{fig:fig1}
\end{figure}

We now elaborate on this example by~\cite{nuz} to uncover its implicit assumptions about readers intuitions and interpretations of the seen code and its behaviour. 
Given code in Figure~\ref{fig:fig1} (LHS) as an input, system z3-4.8.12 produces a model where $x$ and $y$ are both assigned to $0$ suggesting that $a1$ and $a2$ are assigned to $false$. 
The authors of \nuz informally state that code in Figure~\ref{fig:fig1}~(LHS) maximizes expression 
\beq
3 \cdot a2 + 5 \cdot \overline{a1}.
\eeq{ex:max}
To make this claim precise we recall that system \nuz looks for models of a provided SMT formula specified in its {\tt assert} statements --- an implication $a2\rightarrow a1$ in this running example, where $a1$ and $a2$ are constraint atoms associated with (via {\tt define-fun} statements) integer  constraints $x>0$ and $x<y$, respectively. Expressions 
$a2$ and $\overline{a1}$ in~\eqref{ex:max} should be interpreted in relation with some model $X$ for the encoded SMT formula so that, for instance, 
\begin{itemize}
    \item $a2$ is mapped to $0$ if  model $X$ assigns $a2$ to $false$, and $1$ otherwise; similarly 
\item $\overline{a1}$ is mapped to $1$ if  model $X$ assigns $a1$ to $false$, and $0$ otherwise. 
\end{itemize}
It is easy to see that there are three models to the implication  $a2\rightarrow a1$. Furthermore, for each of 
these three models there is an evaluation that maps integer variables $x$ and $y$ into values that satisfy integer  constraints associated with constraint atoms $a1$ and $a2$. Thus, all these models form solutions to the SMT formula specified in Figure~\ref{fig:fig1}.
These models together with the respective value of expression~\eqref{ex:max} follow: 
$$
\ba{lll}
a1 &a2&3 \cdot a2 + 5 \cdot \overline{a1} \\
false&false& 5\\
true&true& 3\\
true&false&0
\ea
$$
It is easy to see that the model listed first maximizes value of expression~\eqref{ex:max} in comparison to other models.

Generalized partially weighted MaxSMT problems and their solutions can be seen as  formal specification of the language supported  by the SMT solver \nuz. Indeed, the statements that follow the key word {\tt assert} correspond to hard problem fragment; whereas the statements that follow the key word 
{\tt assert-soft} form soft problem fragment.
In other words, 
we view \nuz code as a specification of a gpw-MaxSMT problem, whereas we consider
system \nuz to compute solutions for given gpw-MaxSMT problems. 
Let us go back to the \nuz code
 in Figure~\ref{fig:fig1}. It specifies the gpw-MaxSMT problem over vocabulary 
$\{a1,a2\}$,
class of integer  constraints over specification $\spn{\{x,y\}}{\mathbb{Z}}$, and denotation that maps
$a1$ and $a2$ into integer constraints $x>0$ and $x<y$, respectively; so that hard problem fragment consists of propositional formula $a2\rightarrow a1$, and   soft problem fragment consists of ew-conditions $(a2,3)$ and $(\neg a1,5)$.
It is easy to see that interpretation mapping $a1$ and $a2$ to $false$ forms a solution to the constructed gpw-MaxSMT problem.

Above, we complemented an earlier description of \nuz input-output provided by an example with  formal specification of these entities. It is also due to note that gpw-MaxSMT problems are more general than SMT-LIB specifications supported by \nuz. For example, the notion of a level is present in gpw-MaxSMT specifications of its soft fragment. Also, the notion of an extended solution is defined for gpw-MaxSMT problems. This may provide inspirations for possible extensions to system \nuz.

\subsubsection{Cost-variable approach}
\cite{seb12} observed that MaxSMT and its variants encapsulated by gpw-MaxSMT when (non-extended) solutions are considered support optimizations that focus on "propositional part" of a problem encoded within SMT framework. They then proposed an alternative approach that ranks extended models of an SMT formula by value of one of its variables occurring in constraints associated with an SMT formula disregarding any information of a model from a propositional side. In particular, the value of that variable is to be minimized.

We now use ew-systems to capture an  optimization satisfiability module problem (OMT) by~\cite{seb12}.
To proceed to the definition recall the notion of a \hbox{$\sigma,V,D$-theory}, denoted $T_{\sigma,V,D}$, introduced in the beginning of Section~\ref{sec:5.1}.
\begin{definition}[OMT problems and their solutions]
An {\em OMT problem}
over vocabulary $\sigma$, class $\C$ of constraints over specification $\spn{V}{D}$ and $\pair{\sigma_c}{\C}$-denotation and variable $v$ occurring in $V$ is defined as ew-system  $(H,S)$, where $H$ is a theory in SMT-logic over 
$\sigma$, $\C$, and $\pair{\sigma_c}{\C}$-denotation and $S$ consists of a single ew-condition of the form 
$(T_{\emptyset,\{v\},D},0;(v)\mapsto 1)$; We call min-optimal extended models of this ew-systems {\em solutions to OMT}.
\end{definition}
The use of ew-condition $(T_{\emptyset,\{v\},D},0;(v)\mapsto 1)$ in this definition as the only criterion for optimization captures the idea of OMT that relies on the choice of a single variable in constraints to be minimized. Indeed, the first component $T_{\emptyset,\{v\},D}$ of this ew-condition is such that any (extended) interpretation of theory $H$ in SMT-logic with variable $v$ occurring in it is a model of $T_{\emptyset,\{v\},D}$. The cost of any extended interpretation will be identified with the value of $x$ assigned by this interpretation based on 
formula~\eqref{eq:cost2} used in  the definition of the min-optimal extended model (for this instance, $\cW_l$ in \eqref{eq:cost2} is a singleton composed of $T_{\emptyset,\{v\},D}$). Indeed,
take $(I,\nu)$ be an arbitrary model of $B=T_{\emptyset,\{v\},D}$, then  $\br{I\models B}=0$ and  $\br{(I,\nu)\models B}=\nu(x)$; the sum of these values used within formula~\eqref{eq:cost2}  amounts to  $\nu(x)$. This value is then used to decide on min-optimality of the model within expression~\eqref{eq:condeqlmin2}, where $max$ is replaced by $min$.

\begin{samepage}
\begin{example}\label{ex:omt}\nopagebreak 
Example~\ref{ex:smt} defines 
an SMT-logic theory~$H$ that has 
three extended models 
$$(\{b\},\nu_0),~~ (\{b,|x\neq 0|\},\nu_1),~~ (\{b,|x\neq 0|\},\nu_2),$$
where
valuations $\nu_0,\nu_1,\nu_2$ are chosen to assign $x$ to $0$, $1$, $2$, respectively.
A sample OMT problem follows
$$
\Big(H,\big\{\big(T_{\emptyset,\{x\},\{0,1,2\}},0;(x)\mapsto 1\big)\big\}\Big)
$$
A solution for this OMT problem is an extended interpretation $(\{b\},\nu_0)$ -- the one that minimizes the value of $x$. 
\end{example}
\end{samepage}

\subsection{Integer Linear Programming}
An {\em integer linear program (IL-program)}~\citeb{pap82} was given in the introduction in~\eqref{eq:ilp}.
It is easy to see that the statement after the word {\tt maximize} is an integer expression, whereas statements after the words {\tt subject to} are integer  constraints. We now provide an alternative definition to integer linear programs.

\begin{definition}
An IL-program is an ew-system of the form
$(H,S)$, where $H$ is  a theory in a nonnegative I-CSP-logic  and $S$ is an ew-condition  $(T_{\emptyset,\V_H,\mathbb{Z^+}},0;\vec{c})$, where $\vec{c}$ is a coefficient function mapping variables in $\V_H$ to integers $\mathbb{Z}$. Extended optimal models to this system form {\em solutions to an IL-program}. 
\end{definition}
Intuitively,  theory $H$ corresponds to statements following {\tt subject to} in~\eqref{eq:ilp}, while 
the ew-condition $S$ captures a statement following the word {\tt maximize} in~\eqref{eq:ilp}.

\subsection{Optimizations in CAS Programs}
We now turn our attention to constraint answer set programming. At first, we review optimization statements of answer set programming.
We illustrate how they can be understood within CASP framework.
For this task we utilize ew-systems: just as we utilized ew-systems to capture generalization from pw-MaxSAT to pw-MaxSMT realm.
We then look into {\sc clingcon} style optimization statements native to CASP framework. We illustrate that ew-systems are general enough to encapsulate {\sc clingcon} programs with optimizations.
Systems {\sc clingcon-2}~\citeb{ost12} and {\sc clingcon-3}~\citeb{ban17}  vary in the syntax of the languages they accept and algorithmic/implementation details. In case of this paper it is interesting to look at these systems in separation as they provide different support for optimization statements.

\subsubsection{Optimizations in Logic Programming}\label{sec:olp}  
  We now review 
 a definition of a logic program with weak constraints following the lines of~\cite{cal15}.
        A {\em weak constraint} has the form
\beq 
\wr a_1,\dotsc, a_\ell,\ not\  a_{\ell+1},\dotsc,\ not\  a_m[w@l],
\eeq{eq:wc}
where $m>0$ and $a_1,\ldots,a_m$ are atoms,  $w$ (weight) is an integer, and  $l$ (level) is a positive integer. 
In the sequel, we abbreviate expression
\beq
\wr  a_1,\dotsc, a_\ell,\ not\  a_{\ell+1},\dotsc,\ not\  a_m
\eeq{eq:wcbody}
occurring in~\eqref{eq:wc} as $D$ and identify it with the propositional formula
\beq a_1\wedge\dotsc\wedge a_\ell\wedge\ \neg  a_{\ell+1}\wedge\dotsc\wedge\ \neg  a_m.
\eeq{eq:wcbodyf}
An {\em optimization program} (or {\em o-program})  over vocabulary $\sigma$ is a pair~$(\Pi,W)$, where $\Pi$ is a logic program over  $\sigma$ and~$W$ is a finite set of weak constraints over $\sigma$.

Let $\cP=(\Pi,W)$ be an optimization program over vocabulary $\sigma$ (intuitively, $\Pi$ and~$W$ forms  {\em hard} and {\em soft} fragments, respectively). 
By $\level{\cP}$ we denote the set of all levels associated with optimization program $\cP$ constructed as 
$
\{l\mid\, D[w@l]\in W\}$.
Set~$X$ of atoms over $\sigma$ is an {\em answer set} of $\cP$ when it is an answer set of $\Pi$.
\begin{definition}[Optimal answer sets]\label{def:oascc3}
Let $X$ and $X'$ be answer sets of $\cP$.
Answer set $X'$ {\em dominates}  $X$ if 
there exists a level $l\in\level{\cP}$ such that
following conditions are satisfied:
\begin{enumerate}
\item\label{l:cond1} for any level $l'$ that is greater than $l$  the following equality holds
$$
\displaystyle{ \sum_{D[w@l']\in W}{w \cdot\br{X\models D}}
} =\displaystyle{\sum_{D[w@l']\in W}{w \cdot\br{X'\models D}}
}
$$
\item\label{l:cond2} the following inequality holds for level $l$
$$
\displaystyle{\sum_{D[w@l]\in W}{w \cdot\br{X'\models D}}
}<\displaystyle{ \sum_{D[w@l]\in W}{w \cdot\br{X\models D}}
} 
$$ 
\end{enumerate}
An answer set $X^*$ of $\cP$ is {\em optimal}  if there is no answer set~$X'$ of $\cP$ that dominates $X^*$. 
\end{definition}



We now exemplify the definition of an optimization program. Let $\Pi_1$ be logic program~\eqref{ex:slp}.
An optimal answer set of optimization program
\beq
(\Pi_1,\{\wr a,not\ b. -2@1\})
\eeq{eq:sampleop}
is $\{a\}$. 

It is worth noting that an alternative syntax is frequently used by answer set programming practitioners when they expresses optimization criteria:
\beq
\#minimize\{w_1@l_1:lit_1,\dots,w_n@l_n:lit_n \},
\eeq{minimize_statement}
where $lit_i$ is either an atom $a_i$ or an expression $not\ a_i$.
This statement stands for $n$ weak constraints
$$
\wr lit_1[w_1@l_1]~~\dots~~\wr lit_n[w_n@l_n].
$$
Similarly, statement
\beq
\#maximize\{w_1@l_1:lit_1,\dots,w_n@l_n:lit_n \},
\eeq{maximize_statement}
stands for $n$ weak constraints
$$
\wr lit_1[-w_1@l_1]~~\dots~~\wr lit_n[-w_n@l_n].
$$

\cite{lie21} illustrated how o-programs can be identified with w-systems -- a pre-cursor of ew-systems.

\subsubsection{{\sc clingcon-2} style optimizations}
Recall {\em minimize} statement~\eqref{minimize_statement}.
System {\sc clingcon-2} allows a user to write such statements using the following restrictions. All of these statements occurring in a program must either 
\begin{itemize}
        \item come with expressions $lit_i$ ($1\leq i\leq n$) constructed from regular atoms of CAS program, or 
\item come with expressions $lit_i$ ($1\leq i\leq n$) constructed from constraint variables stemming from irregular atoms of CAS program. Also,  $w_i@l_i$ expressions are dropped in this case.\footnote{In case, when this restriction is not satisfied the system {\sc clingcon-2} outputs the following message {\em ERROR: Can not optimize asp and csp at the same time!}}
\end{itemize}
Thus, a user might either impose optimization criteria that pertain regular atoms or constraint variables associated with irregular atoms but not both. At the same time we note that the {\sc clingcon-2} authors provide no declarative semantics for programs with optimizations, but rather explain the behavior of the systems by means of examples. In what follows we capture the semantics of two variants of {\sc clingcon-2} formally. We refer to programs  of {\sc clingcon-2}  supporting optimization statements over regular atoms  {\sc clingcon-2.1} programs. We refer to programs  of {\sc clingcon-2}  supporting optimization statements over constraint variables {\sc clingcon-2.2} programs. 

\paragraph{{\sc clingcon-2.1} programs.}
We now extend CAS programs with weak constraints in a similar way as MaxSMT extends SMT with "soft SMT clauses". 
\begin{definition}[Optimization CAS program and their optimal (extended) answer sets]
An {\em optimization CAS program} (or {\em oCAS-program})  over vocabulary $\sigma$, class $\C$ of constraints, and $(\sigma_c,\C)$-denotation is an ew-system~$(P,W)$, where~$P$ is a CAS program over  $\sigma$, $\C$ and $(\sigma_c,\C)$-denotation,  and~$W$ is a finite set of weak constraints over $\sigma$, where 
we  identify weak constraints of the form $D[w@l]$  with ew-condition $(D,w@l)$ in RSMT-logic
over 
$\sigma$,~$\C$, and $\pair{\sigma_c}{\C}$-denotation. We call oCAS-program {\em {\sc clingcon-2} style}, when there is an additional restriction on its weak constraints $W$ to be over signature $\sigma_r$.
Min-optimal models and min-optimal extended models of ew-system $(P,W)$ form {\em optimal answer sets and optimal extended answer sets} of  oCAS-program~$(P,W)$.
\end{definition}
Note how the syntax of weak constraints supports optimizations that consider ``propositional part'' of a problem encoded within CASP.
Indeed,  coefficients functions of ew-conditions in oCAS-programs are assumed to be the zero functions. Thus, it is easy to see that any optimal extended answer set $(X,\nu)$ of some CAS program is such, whenever $X$ is an optimal answer set of this program; the quality of evaluation $\nu$ of the extended answer set is immaterial. 
 Recall how ``minimize'' statements~\eqref{minimize_statement} are abbreviations for the set of weak constraints (see Section \ref{sec:olp}). In this regard,
{\sc clingcon-2.1} programs  are  captured by {\sc clingcon-2} style oCAS-programs, in other words, the definitions of optimal answer sets and optimal extended answer sets of  oCAS-program capture formally the semantics of {\sc clingcon-2.1} programs. 

We also note that this observation allows us to utilize pw-MaxSMT solvers (such as, for example,  solver \nuz discussed here) for finding solutions to {\sc clingcon-2.1} programs. Indeed, the essential difference between {\sc clingcon-2.1} programs and  pw-MaxSMT formalism boils down to the first component of the pairs capturing these objects. In one case we deal with CAS programs, in another with SMT formulas. 
Theory behind SMT-based CASP solver {\sc ezsmt}~\citebb{lie17}{shen18a}  relies on the established link between these two. Here, we paved the way at enhancing {\sc ezsmt} with the support for optimization statements.

\paragraph{{\sc clingcon-2.2} programs.}

We now elaborate on  {\sc clingcon-2.2} programs. We start by quoting~\cite{ost12}, who provide examples of optimization statements with constraint variables and informal discussions in order to illustrate their behavior. 
We supplement this quote  with additional comments using square brackets to ease the understanding of the narrative. The {\sc clingcon-2.2} programs allow  expressions of the kind

\begin{verbatim}
$maximize{work(A) : person(A)}.
\end{verbatim}

\noindent
and its developers say that this is an instance of

\vspace{2mm}

\begin{quote}
    a maximize statement over constraint variables. This is also a new feature of clingcon. We maximize the sum over a set
    of variables and/or expressions. In this case, we try to maximize 
    $$\hbox{\tt work(adam) \$+work(smith) \$+ work(lea) \$+ work(john)}.$$
    [During  grounding (see, for instance, an account for grounder {\sc gringo}~\citeb{geb07b}) -- a process that is a common first step in solving ASP programs, where ASP variables are instantiated for suitable object constants -- it was established that ASP variable $A$ may take four values, namely, {\em adam, smith, lea} and {\em john}.]
 ... 
To find a constraint optimal solution, we have to combine the enumeration techniques
of \clasp [by \cite{geb07} -- answer set solver within {\sc clingcon}] with the ones from the CP solver. Therefore, when we first encounter a full propositional assignment, we search for an optimal (w.r.t. to the optimize statement) assignment
of the constraint variables using the search engine of the CP solver. Let us explain this with
the following constraint logic program.

\begin{verbatim}
$domain(1..100).
a :- x $* x $< 25.
$minimize{x}.
\end{verbatim}

\noindent
Assume clasp has computed the full assignment $\{Fx \$* x \$< 25, Fa\}$ [irregular atom named $x \$* x \$< 25$  and regular atom $a$ are assigned value {\em false}; intuitively the mentioned irregular atom is mapped into inequality $x * x < 25$ with constraint variable $x$]. Afterwards,
we search for the constraint optimal solution to the constraint variable $x$, which yields
$\{x \rightarrow 5\}$. Given this optimal assignment, a constraint can be added to the CP solver
that all further solutions shall be below/above this optimum $(x<5)$. This constraint will
now restrict all further solutions to be “better”. We enumerate further solutions, using the
enumeration techniques of \clasp. So the next assignment is $\{Tx \$* x \$< 25, Ta\}$ and
the CP solver finds the optimal constraint variable assignment $\{x \rightarrow 1\}$. Each new solution
restricts the set of further solutions, so our constraint is changed to $(x\$<1)$, which then
allows no further solutions to be found.
\end{quote}

\vspace{3mm}

To  formalize the claims of this quote let us capture {\sc clingcon-2.2} programs as a CAS program $P$ extended with an expression of the form
\beq
\$minimize\{lit_1,\dots,lit_n \},
\eeq{ex:clingcon2.2}
where $lit_i$ ($1\leq i\leq n$) is a constraint variable stemming from irregular atoms of the considered CAS program.
Ew-systems can be used to 
characterize {\sc clingcon-2.2} programs
as follows.
\begin{definition}[{\sc clingcon-2.2} programs and their optimal answer sets]
A {\sc clingcon-2.2} program $P$ (where~$P$ is a CAS program over  $\sigma$, $\C$ and $(\sigma_c,\C)$-denotation) extended with minimization statement~\eqref{ex:clingcon2.2} is ew-system~$(P,W)$,  where $W$ is a single
ew-condition $(T_{\sigma_P,\V_P,\D_P},0;c)$ so that  $c$ is a coefficients function that 
to every constraint variable in $\V_P$ 
assigns~$1$ when it appears in~\eqref{ex:clingcon2.2} and $0$ otherwise. 
Models of this ew-systems are called {\em answer sets} of~$P$, while min-optimal models are called {\em optimal answer sets} of $P$.
\end{definition}

Note how {\sc clingcon-2.1} and {\sc clingon-2.2} programs allow the user to either optimize propositional side of a problem or constraint side of the problem but never both.
We now provide a definition for CAS programs with optimizations restoring to the method adopted earlier in stating the definition for gpw-MaxSMT problem. This  allows us to incorporate quality of ``constraint'' part of the solution into assessment of  overall quality of considered solution. This way quality of propositional and constraint part of solution is taken into account. 
\begin{definition}[CAS program with generalized optimizations; their optimal (extended) answer sets]
A {\em CAS program with generalized optimizations} over vocabulary $\sigma$, class $\C$ of constraints, and $\pair{\sigma_c}{\C}$-denotation  is defined as ew-system  $(H,S)$, where $H$ is a theory in CAS-logic over 
$\sigma$, $\C$, and $\pair{\sigma_c}{\C}$-denotation and $S$ is a collection of ew-conditions, whose theories are in RSMT-logic over 
$\sigma$, $\C$, and $\pair{\sigma_c}{\C}$-denotation.
Min-optimal models and min-optimal extended models of ew-system $(H,S)$ form {\em optimal answer sets and optimal extended answer sets} of  CAS program with generalized optimizations $(H,S)$.
\end{definition}
Note how in this definition ew-conditions are more general than in oCAS-programs or {\sc Clingcon 2.2} programs.

\subsubsection{ {\sc clingcon}-3 style optimizations}

\cite{ban17} introduce optimizations supported within {\sc clingcon}-3. They propose minimize statements for CAS programs that have the form
\beq
\$minimize\{b_1\cdot x_1+c_1@l_1,~\dots,~b_n\cdot x_n+c_n@l_n \},
\eeq{cminimize_statement}
where $b_i$ and $c_i$ are integers, $x_i$ are variables stemming from the constraint part of the program, and  $l_i$ is a level -- positive integer. In addition, for any two expressions $b\cdot x+c@l$  and
$b'\cdot x'+c'@l$ occurring in~\eqref{cminimize_statement} variables $x$ and $x'$ are distinct.
Such minimize statements induce optimal extended answer sets as follows. 
For the remainder of this subsection, let
$P$ be a CAS program  $P'$ extended with minimize statement of the form~\eqref{cminimize_statement}. 
Any (extended) answer set of $P'$ is an {\em (extended) answer set} of $P$.
For a variable assignment~$\nu$ and an integer~$l$, 
 $${\sum_{l}^\nu}=\sum_{b\cdot x+c@l\in \eqref{cminimize_statement}}{b\cdot \nu(x) +c}.$$


\begin{definition}[Optimal answer sets of CAS programs due to~\cite{ban17}]
Let
$(X,\nu)$ and $(X',\nu')$ be extended answer sets of $P$.
Extended answer set $(X',\nu')$ {\em dominates} $(X,\nu)$ if there exists a level $l\in \{l_1,\dots,l_n\}$ ($l_1,\dots,l_n$ are levels occurring in~\eqref{cminimize_statement}) such that
\begin{enumerate}
    \item for any level $l'\in \{l_1,\dots,l_n\}$ that is greater than $l$ the following equality holds
    $$
\displaystyle{\sum_{l'}^\nu} =\displaystyle{\sum_{l'}^{\nu'}},
$$
    \item the following inequality holds for level $l$
    $$
\displaystyle{\sum_{l}^{\nu'}} <\displaystyle{\sum_{l}^{\nu}},
$$
\end{enumerate}
Extended answer set $(X^*,\nu^*)$ of $P$ is optimal if there is no extended answer set $(X',\nu')$ that dominates $(X^*,\nu^*)$.
\end{definition}

For every level $l$ appearing in~\eqref{cminimize_statement}, by 
\begin{itemize}
    \item $w_l$ we denote the sum $\displaystyle{\sum_{b\cdot x +c@l \in  ~\eqref{cminimize_statement}} c}$
    ~~
of constant terms in integer  expressions associated with each level;
    \item $c_l$ we denote coefficient function from $\V_{P'}$ to $\mathbb{Z}$ that maps each variable $x$ occurring in $\V_{P'}$ and in expression 
    $b\cdot x +c@l$ in~\eqref{cminimize_statement} to $b$, while all other variables in $\V_{P'}$ are mapped to $0$.
\end{itemize}
 
As earlier we can identify any CAS program with the CAS-logic module.
For the CAS program~$P'$  extended with the minimize statement~\eqref{cminimize_statement}, we identify~\eqref{cminimize_statement} with the set consisting of the following ew-conditions
\begin{itemize}
    \item for every level $l$ appearing in~\eqref{cminimize_statement}, ew-condition $(T_{\sigma_{P'},\V_{P'},\D_{P'}},w_l@l)$; and
    \item for every level $l$ appearing in~\eqref{cminimize_statement}, ew-condition $(T_{\sigma_{P'},\V_{P'},\D_{P'}},0;c_l@l)$. 
\end{itemize}
Once more we can use ew-systems to provide an alternative definition for CAS programs with minimization statements of the kind introduced in this subsection. 
\begin{proposition}\label{prop:caspminimize}
Let $P$ be a  CAS program $P'$ extended with minimize statements of the form~\eqref{cminimize_statement} over vocabulary 
$\sigma$, class $\C$ of constraints, and $(\sigma_c,\C)$-denotation.
Min-optimal extended models of ew-system~$(P',S)$ ---
where~$S$ is a collection of ew-conditions identified/associated with~\eqref{cminimize_statement} of $P$
---
are optimal answer sets of~$P$.
\end{proposition}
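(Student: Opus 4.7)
The plan is to show that (i) the extended models of the ew-system $(P',S)$ coincide with the extended answer sets of $P$, and then (ii) that for each level $l$ appearing in~\eqref{cminimize_statement}, the cost of an extended model $(X,\nu)$ computed in the sense of Definition~\ref{def:oemews} (or equivalently, Definition~\ref{def:optimalemodelewsysASP}) reduces exactly to $\sum_{l}^{\nu}$. Once these two facts are established, the two notions of ``domination'' line up term by term, and the equivalence of min-optimal extended models of $(P',S)$ with optimal answer sets of $P$ follows directly from Proposition~\ref{prop:eqdefs}.

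For step (i), note that the hard fragment of the ew-system is just $P'$ viewed as a CAS-logic module, and that every ew-condition in $S$ has a $\sigma_{P'},\V_{P'},\D_{P'}$-theory as its module, which by definition is satisfied by every extended interpretation in $Int(\sigma_{P'},\V_{P'},\D_{P'})$. Hence $(X,\nu)$ is an extended model of $(P',S)$ precisely when $(X,\nu)$ is an extended answer set of $P'$, which by the stipulation preceding the proposition is the same thing as being an extended answer set of $P$.

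For step (ii), fix a level $l$ and an extended model $(X,\nu)$. Because every ew-condition in $S$ has a trivially-satisfied theory, the indicators $\br{X\models B}$ and $\br{(X,\nu)\models B}$ in expressions~\eqref{eq:isat} and~\eqref{eq:isatext} always evaluate to $w$ and $\sum_{x\in\V_B}\nu(x)\cdot\vec{c}(x)$, respectively, rather than to $0$. Hence, the level-$l$ cost of $(X,\nu)$ computed according to~\eqref{eq:cost2} is
\[
w_l \;+\; \sum_{x\in\V_{P'}} \nu(x)\cdot c_l(x)
\;=\; \sum_{b\cdot x+c@l\,\in\,\eqref{cminimize_statement}} c
\;+\; \sum_{b\cdot x+c@l\,\in\,\eqref{cminimize_statement}} b\cdot \nu(x)
\;=\; \sum_{l}^{\nu},
\]
using the definitions of $w_l$ and $c_l$ and the assumption that distinct summands of~\eqref{cminimize_statement} at the same level involve distinct variables $x$ (so there is no double-counting when the coefficient function is read back off of $c_l$).

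Combining steps (i) and (ii), the level-$l$ cost of any extended model of $(P',S)$ equals $\sum_{l}^{\nu}$, so the ``min-dominates'' relation of Definition~\ref{def:optimalemodelewsysASP} instantiates verbatim to the ``dominates'' relation used by~\cite{ban17} to define optimal answer sets of $P$. Appealing to Proposition~\ref{prop:eqdefs}, the min-optimal extended models of $(P',S)$ are therefore exactly the optimal answer sets of $P$. The main subtlety to watch in the write-up is that the trivial theories $T_{\sigma_{P'},\V_{P'},\D_{P'}}$ force both indicator brackets to fire for every extended interpretation; once this is noted, the equality of cost expressions is a direct rearrangement, and the rest is a level-by-level translation between the two domination relations.
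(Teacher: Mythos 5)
Your proposal is correct and follows essentially the same route as the paper's proof: establish that extended models of $(P',S)$ are exactly the extended answer sets of $P$, then show the level-$l$ cost expression~\eqref{eq:cost2} collapses to $\sum_{l}^{\nu}$ via the two trivially-satisfied ew-conditions carrying $w_l$ and $c_l$, so the two domination relations coincide level by level. The only cosmetic difference is that you route through Proposition~\ref{prop:eqdefs} to link the $\arg\max$-style and domination-style definitions, whereas the paper works directly with the domination formulations on both sides.
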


\section{Formal Properties of Ew-systems}\label{sec:formalProperties}
\cite{lie21} stated many interesting formal results for the case of w-systems; 
\cite{lie22} presented proofs for these results.
Many of these results/proofs can be lifted to the case of ew-systems. Here we present a series of formal results about ew-systems. 
 Word {\em Property} denotes the results that follow rather immediately  from the definitions of a  model/optimal (extended) model. 

 \begin{property}\label{prop:one}
Any two ew-systems with the same hard theory have the same models/extended model.
\end{property}
 Due to this property when stating the results for ew-systems that share the same hard theory, we only focus on optimal and min-optimal (extended) models. 
\begin{property}\label{prop:secondempty}
Any model/extended model of ew-system of the form $(\cH,\emptyset)$ is optimal/min-optimal.
\end{property}
\begin{property}\label{prop:remove0}
Optimal/min-optimal models of the following ew-systems coincide
\begin{itemize}
\item ew-system $\cW$ and 
\item ew-system resulting from $\cW$ by dropping all of its w-conditions whose  weight is~$0$.
\end{itemize}
\end{property}

\begin{property}\label{prop:removecew}
Optimal/min-optimal models of the following ew-systems coincide
\begin{itemize}
\item ew-system $\cW$ and 
\item ew-system resulting from $\cW$ by replacing each of its ew-conditions of the form 
$(T,w,\vec{c}@l)$ with ew-condition $(T,w@l)$.
\end{itemize}
\end{property}
This property points at the fact that $\vec{c}$ component of ew-conditions are only relevant when optimality of {\em extended} models is considered.

We now state simple properties that pertain extended models of ew-systems.  
\begin{property}\label{prop:specialForm}
Let $\cW$ be an ew-system, whose ew-conditions have special form $(T,w@l)$.
An extended model $(I,\nu)$ of $\cW$ is 
 optimal/min-optimal if and only if $I$ is an optimal/min-optimal model of $\cW$.
\end{property}

\begin{property}\label{prop:remove00ew}
Optimal/min-optimal extended models of the following ew-systems coincide
\begin{itemize}
\item ew-system $\cW$ and 
\item ew-system resulting from $\cW$ by dropping all of its ew-conditions whose form is  $(T,0@l)$
\end{itemize}
\end{property}

We call an ew-system $\cW$ {\em level-normal}, when
 we can construct the sequence of numbers $1,2,\dots,|\level{\cW}|$ from the elements in $\level{\cW}$.
\cite{lie21} stated propositions in spirit of Propositions~\ref{prop:levelnormal} and~\ref{thm:alloptimal} presented below for the case of w-systems.
Here we lift these results to the case of ew-systems.
\begin{proposition}\label{prop:levelnormal}
 Optimal/min-optimal models/extended models of the following ew-systems coincide
\begin{itemize}
\item ew-system $\cW$ and 
\item the level-normal ew-system constructed from $\cW$ by replacing each level $l_i$ occurring in its ew-conditions with its ascending sequence order number $i$, where we arrange elements in $\level{\cW}$ in a   sequence  in ascending order $l_1,l_2,\dots l_{|\level{\cW}|}$. 
\end{itemize}
\end{proposition}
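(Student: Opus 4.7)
The plan is to route through the equivalent domination-based characterizations of optimality (Definitions~\ref{def:optimalmodelewsysASP} and~\ref{def:optimalemodelewsysASP}), invoking Proposition~\ref{prop:eqdefs}, because those formulations make transparent the fact that optimality depends on levels only through their \emph{order}, and on ew-conditions only through their theories, weights, and coefficient functions---none of which is altered by relabeling.

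First I would fix $\level{\cW} = \{l_1 < l_2 < \cdots < l_n\}$ and let $\cW'$ denote the level-normal system obtained by the substitution $l_i \mapsto i$ inside every ew-condition. Since this substitution leaves the hard part $\cH$ untouched and modifies only the level component of each ew-condition in the soft part, Property~\ref{prop:one} yields that $\cW$ and $\cW'$ have exactly the same models and extended models. Next, the map $l_i \mapsto i$ is an order-isomorphism between $\level{\cW}$ and $\level{\cW'}=\{1,\ldots,n\}$, and hence respects the successor operator: $\prec{l_i}$ in $\cW$ corresponds to $\prec{i} = i{+}1$ in $\cW'$. Moreover, the relabeling induces a bijection between $\cW_{l_i}$ and $\cW'_{i}$ that preserves the triples $(T,w,\vec{c})$ inside each ew-condition. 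Because the cost mappings~\eqref{eq:isat} and~\eqref{eq:isatext} depend only on $T$, $w$, and $\vec{c}$---the level index $l$ does not appear in them---for every (extended) interpretation $(I,\nu)$ the per-level totals agree:
$$
\sum_{B\in\cW_{l_i}}\br{I\models B} \;=\; \sum_{B\in\cW'_{i}}\br{I\models B}, \qquad
\sum_{B\in\cW_{l_i}}\bigl(\br{I\models B}+\br{(I,\nu)\models B}\bigr) \;=\; \sum_{B\in\cW'_{i}}\bigl(\br{I\models B}+\br{(I,\nu)\models B}\bigr).
$$

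Finally, I would inspect Definitions~\ref{def:optimalmodelewsysASP} and~\ref{def:optimalemodelewsysASP}: the (min-/max-)domination relation is phrased entirely in terms of (i) the strict order ``$l' > l$'' on $\level{\cW}$, and (ii) the per-level cost sums displayed above. Both are preserved under the relabeling by the two bullets in the previous paragraph, so the domination relations on models and on extended models of $\cW$ and of $\cW'$ literally coincide. Consequently, the sets of optimal and of min-optimal models (respectively, extended models) of $\cW$ and $\cW'$ are the same, and Proposition~\ref{prop:eqdefs} transfers this conclusion to the argmax-style Definitions~\ref{def:oemewsys} and~\ref{def:oemews}.

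The argument is essentially a bookkeeping exercise; the main obstacle is simply making explicit that the order-isomorphism $l_i \mapsto i$ simultaneously respects the ``$\prec$'' successor structure and the per-level content of the soft fragment. Routing through the domination formulation sidesteps the recursion on $\prec{l}$ built into Definitions~\ref{def:oemewsys} and~\ref{def:oemews}, which would otherwise force a straightforward but tedious downward induction on levels $l_n, l_{n-1},\ldots, l_1$.
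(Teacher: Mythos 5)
Your proof is correct and rests on the same observation as the paper's (very brief) argument: levels enter the notion of optimality only through their relative order (via $\prec{(\cdot)}$) and through the partition of ew-conditions into the sets $\cW_l$, both of which are preserved by the order-isomorphism $l_i \mapsto i$. The paper states this informally in a few lines; your routing through the domination characterization of Definitions~\ref{def:optimalmodelewsysASP} and~\ref{def:optimalemodelewsysASP} is a sound way of making the same idea precise while avoiding the downward induction on levels.
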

\begin{proposition}\label{thm:alloptimal}
For an ew-system $\cW=(\cH,\cS)$, if every  level $l\in\level{\cW}$ is such that 
for any distinct models $I$ and $I'$ of $\cW$
 the  equality
 \beq
 \sum_{B\in\cW_{l}}{ \br{I\models B}}= \sum_{B\in\cW_{l}}{ \br{I'\models B}}
 \eeq{eq:eqcond} 
 holds
then optimal/min-optimal models of ew-systems
$\cW$ and  $(\cH,\emptyset)$ coincide. Or, in other words, any model of $\cW$ is also optimal and min-optimal model.
\end{proposition}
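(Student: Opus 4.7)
The plan is to prove this by reducing optimality to the domination-based criterion of Definition~\ref{def:optimalmodelewsysASP}, for which the proposition's hypothesis yields a one-line contradiction. First I would observe that, by Property~\ref{prop:one}, the two ew-systems $\cW$ and $(\cH,\emptyset)$ share exactly the same set of models, and by Property~\ref{prop:secondempty} every model of $(\cH,\emptyset)$ is both optimal and min-optimal. Consequently, the entire claim reduces to showing that every model of $\cW$ is optimal and min-optimal; from this the coincidence of the respective sets of optimal/min-optimal models follows immediately.

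Next, I would invoke Proposition~\ref{prop:eqdefs} so that optimality in $\cW$ may be established via the absence of a max-dominating (respectively, min-dominating) model. Suppose, towards a contradiction, that some model $I^{*}$ of $\cW$ is not optimal. Then there exists a model $I'$ of $\cW$ that max-dominates $I^{*}$, which by Definition~\ref{def:optimalmodelewsysASP} requires some level $l\in\level{\cW}$ where
$$
\sum_{B\in\cW_{l}}\br{I'\models B} \;>\; \sum_{B\in\cW_{l}}\br{I^{*}\models B}.
$$
If $I'=I^{*}$, the inequality is obviously false; if $I'\neq I^{*}$, the hypothesis~\eqref{eq:eqcond} applied at level $l$ yields equality, again contradicting the strict inequality. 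Hence no max-dominating model exists and $I^{*}$ is optimal. The symmetric argument (with $>$ replaced by $<$) rules out min-domination, so $I^{*}$ is also min-optimal. Since $I^{*}$ was arbitrary, every model of $\cW$ is optimal and min-optimal, which together with the earlier reduction completes the argument.

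The step I expect to be the main (and only) obstacle is purely notational rather than mathematical: one must align the domination-based formulation used here (via Proposition~\ref{prop:eqdefs}) with the $\arg\max$/$\arg\min$ formulation from Definition~\ref{def:oemewsys}, and make sure the quantification over levels in Definition~\ref{def:optimalmodelewsysASP} matches the universal hypothesis given at \emph{every} level in the statement. Beyond that, the degenerate case in which $\cW$ admits a unique model is handled trivially, since the hypothesis is then vacuous and that single model is vacuously undominated.
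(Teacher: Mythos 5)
Your argument is correct, but it follows a different route from the paper's. The paper dispatches Proposition~\ref{thm:alloptimal} in one line as an immediate consequence of the more general removal lemma, Proposition~\ref{thm:samewcond}: since the hypothesis of Proposition~\ref{thm:alloptimal} (equality of the level-$l$ sums over \emph{all} distinct models) implies the hypothesis of Proposition~\ref{thm:samewcond} for $S=\cW_l$ at each level $l$, one strips away the ew-conditions level by level until the empty soft part $(\cH,\emptyset)$ remains. You instead give a direct, self-contained verification: after reducing to ``every model of $\cW$ is optimal and min-optimal'' via Properties~\ref{prop:one} and~\ref{prop:secondempty}, you pass through Proposition~\ref{prop:eqdefs} to the domination characterization of Definition~\ref{def:optimalmodelewsysASP} and observe that the strict inequality required by condition~2 of domination can never hold when all level sums agree across distinct models. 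Both routes are sound; the paper's buys economy by reusing a lemma whose proof is itself only sketched by reference to prior work, while yours is more elementary and does not depend on Proposition~\ref{thm:samewcond} at all (only on the equivalence of the two optimality definitions). Your handling of the edge cases ($I'=I^*$ and the unique-model case) is careful and correct.
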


The proposition above formulated for  the case of extended models follows.
 
\begin{samepage}
\begin{proposition}\label{thm:alloptimalew}\nopagebreak
For an ew-system $\cW=(\cH,\cS)$, if every  level $l\in\level{\cW}$ is such that 
for any distinct extended models $(I,\nu)$ and $(I',\nu')$ of $\cW$
the  equality
\beq\displaystyle{
\sum_{B\in\cW_l}{ (\br{I\models B} + \br{(I,\nu)\models B})}
=
\sum_{B\in\cW_l}{ (\br{I'\models B} + \br{(I',\nu')\models B})}
}
\eeq{eq:eqcond1} 
holds
then optimal/min-optimal extended models of ew-systems
$\cW$ and  $(\cH,\emptyset)$ coincide. Or, in other words, any extended model of $\cW$ is also optimal and min-optimal model.
\end{proposition}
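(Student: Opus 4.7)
The plan is to reduce the statement to the domination-based formulation of (min-)optimality, which is equivalent to the $\arg\max$/$\arg\min$ formulation by Proposition~\ref{prop:eqdefs}. Under that formulation, an extended model $(I^*,\nu^*)$ fails to be optimal in $\cW$ precisely when some other extended model $(I',\nu')$ \emph{max-dominates} it, and this requires a strict inequality between level-$l$ cost sums at some $l\in\level{\cW}$. Hypothesis~\eqref{eq:eqcond1} rules out any such strict inequality, so domination becomes impossible.

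First I would invoke Property~\ref{prop:one} to note that $\cW=(\cH,\cS)$ and $(\cH,\emptyset)$ have exactly the same extended models, and Property~\ref{prop:secondempty} to note that in $(\cH,\emptyset)$ every extended model is both optimal and min-optimal. Consequently, it suffices to establish that every extended model of $\cW$ is optimal and min-optimal in $\cW$ itself; the coincidence with $(\cH,\emptyset)$ then follows immediately.

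The core step proceeds by contradiction. Let $(I^*,\nu^*)$ be an extended model of $\cW$ and suppose it is max-dominated by some extended model $(I',\nu')$. By Definition~\ref{def:optimalemodelewsysASP}, there exists $l\in\level{\cW}$ with
\[
\sum_{B\in\cW_{l}}{(\br{I'\models B}+\br{(I',\nu')\models B})} > \sum_{B\in\cW_{l}}{(\br{I^*\models B}+\br{(I^*,\nu^*)\models B})}.
\]
Since the strict inequality forces $(I',\nu')\neq(I^*,\nu^*)$, hypothesis~\eqref{eq:eqcond1} applies to this distinct pair and yields equality of the two sums at every level, contradicting the displayed inequality. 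The argument for min-optimality is symmetric: min-domination demands the reverse strict inequality, which is ruled out by the same equality. Thus every extended model of $\cW$ is simultaneously optimal and min-optimal.

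The one subtlety worth flagging, though not really an obstacle, is that hypothesis~\eqref{eq:eqcond1} is quantified over \emph{distinct} extended models. This restriction is harmless because the strict inequality demanded by (max- or min-) domination cannot be witnessed by comparing an extended model with itself, so the self-comparison case never arises. I expect the only other care to be taken is bookkeeping: checking that the vocabulary, variables, and domain conventions used to evaluate $\br{\cdot\models B}$ and $\br{(\cdot,\cdot)\models B}$ over the ambient signature of $\cW$ agree for both $(I^*,\nu^*)$ and $(I',\nu')$—which they do, since both are extended interpretations over $\sigma_\cW$, $\V_\cW$, $\D_\cW$ by the definition of extended model of an ew-system.
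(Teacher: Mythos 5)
Your proof is correct, but it takes a different route from the paper. The paper obtains Proposition~\ref{thm:alloptimalew} as an immediate corollary of Proposition~\ref{thm:samewcondtwo}: since the equality~\eqref{eq:eqcond1} holds for all distinct extended models (hence in particular for all distinct $\prec{l}$-optimal ones), one applies Proposition~\ref{thm:samewcondtwo} level by level to strip every $\cW_l$ out of $\cS$, arriving at $(\cH,\emptyset)$, whose extended models are all optimal/min-optimal by Property~\ref{prop:secondempty}. You instead argue directly from the domination characterization (Definition~\ref{def:optimalemodelewsysASP}, linked to the $\arg\max$ formulation via Proposition~\ref{prop:eqdefs}): max- or min-domination requires a strict inequality between the level-$l$ cost sums of two necessarily distinct extended models, which hypothesis~\eqref{eq:eqcond1} forbids, so no extended model is dominated. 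Both arguments are sound. The paper's derivation is shorter on the page because it amortizes the work into a lemma it must prove anyway and thereby inherits the inductive bookkeeping on levels; your argument is more self-contained and elementary, avoids the level-by-level induction entirely, and makes the underlying reason for the result transparent. Your handling of the ``distinct'' qualifier --- observing that a strict inequality cannot be witnessed by a self-comparison --- is exactly the right point to flag and is handled correctly.
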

\end{samepage}

Let $\cW=(\cH,\cS)$ be  an ew-system. For a set $S$ of ew-conditions, 
by $\less{\cW}{S}$ we denote the ew-system $(\cH,\cS\setminus S)$.

\begin{proposition}\label{thm:samewcond}
For a ew-system $\cW=(\cH,\cS)$, if there is a set $S\subseteq \cS$ of ew-conditions all sharing the same level  $l$ 
such that 
for any distinct $\prec{l}$-optimal/min-optimal models $I$ and $I'$ of $\cW$
(or any distinct  models $I$ and $I'$ of $\cW$ in case $\prec{l}$ is undefined)
 the equality~\eqref{eq:eqcond}, where $\cW_{l}$ is replaced by $S$, holds 
then $\cW$ has the same  optimal/min-optimal models as  $\less{\cW}{S}$. 
\end{proposition}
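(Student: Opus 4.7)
My plan is to reduce the claim to a comparison of the level-by-level argmax computations of Definition~\ref{def:oemewsys}. First, by Property~\ref{prop:one}, $\cW$ and $\cW'=\less{\cW}{S}$ share the hard theory $\cH$ and hence have identical collections of (extended) models; so only the notions of $\hat{l}$-optimality need to be compared. I would then proceed by downward induction on $\hat{l}\in\level{\cW}$.

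For any $\hat{l}>l$ one has $\cW_{\hat{l}}=\cW'_{\hat{l}}$, since all of $S$ sits at level~$l$. A straightforward downward induction starting from the greatest level therefore shows that the $\hat{l}$-optimal models of $\cW$ and $\cW'$ coincide whenever $\hat{l}>l$. In particular, the set of $\prec{l}$-optimal models is the same for the two systems; this is precisely the search space used at level $l$ in equation~\eqref{eq:condeqlmin}.

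The heart of the argument is at level $l$ itself. Since $\cW_l=\cW'_l\sqcup S$, for any model $I$ I can decompose
\[\sum_{B\in\cW_l}\br{I\models B}=\sum_{B\in\cW'_l}\br{I\models B}+\sum_{B\in S}\br{I\models B}.\]
The hypothesis of the proposition asserts that the second summand on the right takes the same value as $I$ ranges over the $\prec{l}$-optimal models of $\cW$ (or over all models, when $\prec{l}$ is undefined). Adding a constant does not change the argmax or the argmin, so maximising or minimising the left-hand side over the common pool of $\prec{l}$-optimal models selects exactly the same models as maximising or minimising $\sum_{B\in\cW'_l}\br{I\models B}$ over that pool. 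Both the optimal and min-optimal variants follow at level~$l$ from this single observation.

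The main obstacle I anticipate is the bookkeeping sub-case $S=\cW_l$, in which level~$l$ disappears from $\level{\cW'}$ altogether. My plan there is to argue that the constancy of $\sum_{B\in S}\br{I\models B}$ collapses $l$-optimality to $\prec{l}$-optimality inside $\cW$, so that every $\prec{l}$-optimal model of $\cW$ is already $l$-optimal; meanwhile $\cW'$ simply skips level~$l$ and passes the same set of models down to the next level. In either sub-case the collection of models carried into the recursion below~$l$ is identical for the two systems, and since the ew-conditions at all lower levels agree, the downward induction completes. The argument transfers verbatim to the min-optimal case by swapping argmax for argmin throughout.
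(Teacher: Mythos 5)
Your argument is correct and is essentially the proof the paper intends: the paper merely defers to the corresponding result for w-systems (Proposition~8 of the precursor work) together with its general remark about downward induction on the levels arranged in descending order, and your explicit write-up --- identical model sets via Property~\ref{prop:one}, coincidence of the $\hat{l}$-optimal models for all $\hat{l}>l$ since the ew-conditions there agree, and the decomposition of $\sum_{B\in\cW_l}\br{I\models B}$ into the part over $\cW_l\setminus S$ plus the part over $S$, which is constant on the relevant pool and hence does not affect the argmax/argmin in~\eqref{eq:condeqlmin} --- fills in exactly that induction. Your treatment of the sub-case $S=\cW_l$, where level $l$ disappears from $\level{\less{\cW}{S}}$ and $l$-optimality in $\cW$ collapses to $\prec{l}$-optimality, is the one bookkeeping point the paper's sketch leaves implicit, and you resolve it correctly.
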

We can formulate a similar claim for the case of extended models.


\begin{proposition}\label{thm:samewcondtwo}
For an ew-system $\cW=(\cH,\cS)$, if there is a set $S\subseteq \cS$ of ew-conditions all sharing the same level  such that 
for any distinct  $\prec{l}$-optimal/min-optimal  extended models $(I,\nu)$ and $(I',\nu')$ of $\cW$ 
(or any distinct  extended models $(I,\nu)$ and $(I',\nu')$ of $\cW$ in case $\prec{l}$ is undefined)
the equality~\eqref{eq:eqcond1}, where $\cW_{l}$ is replaced by $S$, holds
then $\cW$ has the same  optimal/min-optimal extended models as~$\less{\cW}{S}$. 
\end{proposition}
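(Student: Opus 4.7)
The plan is to mirror the argument for Proposition~\ref{thm:samewcond} but carried out at the level of extended models and the cost expression~\eqref{eq:cost2}, relying on the equivalent ``domination'' formulation from Definition~\ref{def:optimalemodelewsysASP} (justified by Proposition~\ref{prop:eqdefs}). The first observation is that $\cW$ and $\less{\cW}{S}$ share the same hard component $\cH$, so by Property~\ref{prop:one} they have exactly the same extended models; the only question is whether their optimality relations agree.

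My strategy is to prove, by downward induction on the levels $l' \in \level{\cW}$, that the $l'$-optimal (respectively $l'$-min-optimal) extended models of $\cW$ coincide with those of $\less{\cW}{S}$. For $l' > l$, we have $\cW_{l'} = (\less{\cW}{S})_{l'}$ because every element of $S$ has level exactly $l$; hence the cost expression~\eqref{eq:cost2} is literally identical for both systems at level $l'$, and the inductive step is immediate. The base case is the greatest level in $\level{\cW}$, which is either strictly above $l$ (and thus covered by the same identity of cost expressions) or equals $l$ (the case where $\prec{l}$ is undefined, handled below).

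The crucial step is the level $l$ itself. By the inductive hypothesis, $\cW$ and $\less{\cW}{S}$ already share the same set $\mathcal{O}$ of $\prec{l}$-optimal extended models (or of plain extended models when $\prec{l}$ is undefined). Over the pairs in $\mathcal{O}$, the cost at level $l$ for $\cW$ decomposes as
\[
\sum_{B\in\cW_l}\!\bigl(\br{I\models B}+\br{(I,\nu)\models B}\bigr)
= \sum_{B\in S}\!\bigl(\br{I\models B}+\br{(I,\nu)\models B}\bigr)
+ \sum_{B\in (\less{\cW}{S})_l}\!\bigl(\br{I\models B}+\br{(I,\nu)\models B}\bigr).
\]
The hypothesis of the proposition supplies exactly what is needed: the first summand is the same constant for every $(I,\nu),(I',\nu')\in\mathcal{O}$. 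Therefore the $\argmax$ (or $\argmin$) over $\mathcal{O}$ of the full $\cW$-cost agrees with the $\argmax$ (or $\argmin$) of the $\less{\cW}{S}$-cost, proving that the $l$-optimal and $l$-min-optimal extended models coincide for the two ew-systems. For levels $l' < l$, the argument is then identical to the case $l' > l$: the cost expressions at $l'$ are literally the same, and the set of $\prec{l'}$-optimal models has just been shown equal.

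Running the induction through all levels of $\level{\cW}$ yields that a pair $(I^*,\nu^*)$ is $l'$-optimal in $\cW$ for every $l'$ iff it is $l'$-optimal in $\less{\cW}{S}$ for every $l'$, which is exactly the definition of optimal (or min-optimal) extended model. I expect the only delicate point of the write-up to be the precise handling of the edge case where $\prec{l}$ is undefined, so that the ``equality on all $\prec{l}$-optimal models'' hypothesis is correctly read as ``equality on all extended models of $\cW$''; the rest is a bookkeeping exercise once the cost decomposition above is in place.
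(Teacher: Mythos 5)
Your argument is correct and matches the approach the paper intends: the paper itself only remarks that this proposition is proved by lifting the proof of Proposition~8 of Lierler~(2022) from models to extended models, and your downward induction on levels together with the ``constant summand over $S$ does not change $\argmax/\argmin$'' decomposition at level $l$ is exactly that lifted argument. The only bookkeeping point worth adding in a full write-up is the case $S=\cW_l$, where level $l$ disappears from $\level{\less{\cW}{S}}$ and one must note that the $l$-optimal extended models of $\cW$ then coincide with its $\prec{l}$-optimal ones (the cost being constant there), so the induction passes through the missing level.
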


 For a coefficients function~$\vec{c}$ mapping variables  into reals by~$\vec{c}^{-1}$ we denote a function 
 on the same set of variables as~$\vec{c}$
 defined 
  as follows $$\vec{c}^{-1}(x)=-1\cdot \vec{c}(x).$$
 For ew-condition $(T,w;\vec{c}@l)$, we define two mappings into related ew-conditions
 $$
 \ba{l}
 \signo{(T,w;\vec{c}@l)}=(T,-1\cdot w;\vec{c}@l),\\
 \signoo{(T,w;\vec{c}@l)}=(T,-1\cdot w;\vec{c}^{-1}@l).\\
 \ea
 $$
For ew-system $(\cH,\{B_1,\dots, B_n\})$, we define two mappings into related ew-systems using concepts above 
$$
\ba{l}
\signo{(\cH,\{B_1,\dots, B_n\})}=(\cH,\{\signo{B_1},\dots, \signo{B_n}\})\\
\signoo{(\cH,\{B_1,\dots, B_n\})}=(\cH,\{\signoo{B_1},\dots, \signoo{B_n}\})\\
\ea
$$
With this newly introduced notation we can now claim the relation between optimal and min-optimal (extended) models of ew-systems.

\begin{proposition}\label{prop:relatives}
For an  ew-system $\cW$,
the optimal models (min-optimal models) of $\cW$ coincide with the min-optimal models (optimal models) of $\signo{\cW}$. 
\end{proposition}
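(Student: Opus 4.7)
The plan is to exploit three immediate observations: (i) by Property~\ref{prop:one}, $\cW$ and $\signo{\cW}$ have identical models, since they share the same hard component $\cH$; (ii) the transformation $\signo{\cdot}$ preserves the theory of every ew-condition and only negates its weight; and (iii) $\arg\max_I f(I)=\arg\min_I(-f(I))$ whenever the range of $I$ is fixed.

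First I would establish the pointwise identity underlying the argument. For any ew-condition $B=(T,w;\vec{c}@l)$ of $\cW$ and its image $\signo{B}=(T,-w;\vec{c}@l)$ in $\signo{\cW}$, an interpretation $I$ satisfies $B$ exactly when it satisfies $\signo{B}$, because the underlying module $T$ is the same. Definition~\eqref{eq:isat} then gives $\br{I\models\signo{B}}=-\br{I\models B}$, and summing over any fixed level $l\in\level{\cW}=\level{\signo{\cW}}$ yields
$$\sum_{B'\in\signo{\cW}_l}\br{I\models B'}\;=\;-\sum_{B\in\cW_l}\br{I\models B}.$$

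Next I would push this identity through Definition~\ref{def:oemewsys} by induction on the levels of $\level{\cW}$, processed in descending order. For the greatest level, the $\arg$ operators in the two optimality definitions both range over models of the respective ew-system, and by observation (i) those ranges coincide; the sign-flip identity then shows immediately that the $l$-optimal models of $\cW$ are exactly the $l$-min-optimal models of $\signo{\cW}$. Assuming inductively that the $\prec{l}$-optimal models of $\cW$ equal the $\prec{l}$-min-optimal models of $\signo{\cW}$, the ranges of the $\arg$ operators at level $l$ again coincide, and one further application of the sign-flip identity extends the coincidence to level $l$. Since being optimal (respectively min-optimal) means being $l$-optimal (respectively $l$-min-optimal) for every $l\in\level{\cW}$, this yields one direction of the proposition; the converse follows by swapping the roles of $\cW$ and $\signo{\cW}$, noting that $\signo{\cdot}$ is an involution, so $\signo{\signo{\cW}}=\cW$.

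The main obstacle is purely bookkeeping: keeping the descending induction on levels aligned so that the ranges of the two $\arg$ operators coincide step by step before invoking the sign-flip identity at that level. The underlying algebra is the trivial identity $\arg\max f=\arg\min(-f)$, so once the ranges are matched the rest is immediate; in particular, the $\vec{c}$ component of each ew-condition plays no role here, in agreement with Property~\ref{prop:removecew}.
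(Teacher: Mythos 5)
Your proof is correct and takes essentially the same route as the paper, which defers to the proof of Proposition~9 of the cited w-systems work and spells out the analogous argument for Proposition~\ref{prop:relatives2}: note that the (extended) models coincide, then proceed by induction on levels in descending order, rewriting $\arg\max$ of the level-$l$ sum as $\arg\min$ of its negation via the sign-flipped weights. The only cosmetic difference is your explicit appeal to the involution $\signo{\signo{\cW}}=\cW$ for the converse direction, which the paper leaves implicit.
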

\begin{proposition}\label{prop:relatives2}
For an  ew-system $\cW$,
the extended optimal models (extended min-optimal models) of $\cW$ coincide with the extended min-optimal models (extended optimal models) of $\signoo{\cW}$. 
\end{proposition}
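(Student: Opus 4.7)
The plan is to reduce the statement to a simple sign-flipping algebraic identity on the cost function, then propagate it through the level-recursive definition of optimality.

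First I would observe that the transformation $\signoo{}$ touches only the weight and the coefficients function of each ew-condition, leaving the underlying e-module $T$ unchanged. Since membership $(I,\nu)\models B$ depends only on $T$, the ew-systems $\cW$ and $\signoo{\cW}$ have exactly the same extended models, and for every level $l$ they share the same set of models of $\cW_l$. So Proposition~\ref{prop:one} (applied implicitly to the hard part, together with the unchanged theories of soft conditions) guarantees we are comparing the same space of candidates.

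Next I would prove the key algebraic identity: for every ew-condition $B=(T,w;\vec{c}@l)$ and every extended interpretation $(I,\nu)$ that is coherent with $B$,
\[
\br{I\models \signoo{B}} + \br{(I,\nu)\models \signoo{B}} \;=\; -\bigl(\br{I\models B} + \br{(I,\nu)\models B}\bigr).
\]
When $(I,\nu)\not\models B$ both sides are $0$; otherwise the first term becomes $-w$ by the definition of $\signoo{B}$, and the second term becomes $\sum_{x\in\V_B}\nu(x)\cdot(-1\cdot\vec{c}(x)) = -\sum_{x\in\V_B}\nu(x)\cdot\vec{c}(x)$. Summing over $B\in\cW_l$ yields that the level-$l$ cost under $\signoo{\cW}$ is the negation of the level-$l$ cost under $\cW$ for every extended interpretation.

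With this identity in hand, I would then carry out an induction on levels, processing $\level{\cW}=\level{\signoo{\cW}}$ from highest to lowest as in Definition~\ref{def:oemews}. For the greatest level $l$, the set of $l$-optimal extended models of $\cW$ is determined by an $\arg\max$ over all extended models of $\cW$; by the identity above this is the same set as the $\arg\min$ over all extended models of $\signoo{\cW}$, hence the set of $l$-min-optimal extended models of $\signoo{\cW}$. For the inductive step at a lower level $l$, the induction hypothesis says that the pool of $\prec{l}$-optimal extended models of $\cW$ coincides with the pool of $\prec{l}$-min-optimal extended models of $\signoo{\cW}$; applying the identity again to this common pool shows the $\arg\max$ for $\cW_l$ equals the $\arg\min$ for $\signoo{\cW}_l$. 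Taking the intersection over all levels gives that an extended model is optimal for $\cW$ iff it is min-optimal for $\signoo{\cW}$, and the symmetric direction follows because $\signoo{\signoo{\cW}}=\cW$.

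I do not expect a real obstacle here: the only subtle point is aligning the recursion, since the sets over which $\arg\max$ and $\arg\min$ range are themselves defined by the preceding higher levels. The cleanest route may in fact be to bypass Definition~\ref{def:oemews} altogether and work with the dominance formulation of Definition~\ref{def:optimalemodelewsysASP} (justified by Proposition~\ref{prop:eqdefs}): the identity above shows that $(I',\nu')$ max-dominates $(I,\nu)$ in $\cW$ if and only if it min-dominates it in $\signoo{\cW}$, and the equivalence of optimality with absence of a dominator then yields the proposition immediately.
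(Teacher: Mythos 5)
Your proof is correct and takes essentially the same route as the paper's: both note that $\cW$ and $\signoo{\cW}$ have the same extended models, turn the level-$l$ $\arg\max$ into an $\arg\min$ by negating the summed cost (which is exactly your sign-flip identity), and propagate this through an induction on levels from highest to lowest. One tiny imprecision worth fixing: in the sub-case where $(I,\nu)\not\models B$ but $I\models B$ (possible, since $I\models B$ only requires \emph{some} witnessing evaluation), the two sides are $-w$ and $-(w+0)$ rather than $0$; your identity still holds because $\br{I\models\signoo{B}}=-\br{I\models B}$ and $\br{(I,\nu)\models\signoo{B}}=-\br{(I,\nu)\models B}$ hold term by term, so nothing breaks.
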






\paragraph{Eliminating Negative (or Positive) Weights}

We call e-logics $\cL$ and $\cL'$ {\em compatible} when 
their vocabularies, domains, and variables coincide, i.e.,  \hbox{$\sigma_\cL=\sigma_{\cL'}$, $\D_{\cL}=\D_{\cL'}$, and $\V_{\cL}=\V_{\cL'}$}. 
Let $\cL$ and $\cL'$ be compatible logics, and
$T$ and $T'$ be theories in these logics, respectively.
We call a theory $T$ (and a w-condition $(T,w;\vec{c}@l)$) {\em equivalent to}
 a theory $T'$ (and a w-condition $(T',w;\vec{c}@l)$, respectively), when $sem(T)= sem(T')$.

\begin{property}\label{propo:equivalent}
Models and optimal/min-optimal models/extended models  of  ew-systems 
$$(\{T_1,\dots,T_n\},\{B_1,\dots,B_m\}) \hbox{ and } (\{T'_1,\dots,T'_n\},\{B'_1,\dots,B'_m\})$$ 
coincide when
(i) $T_i$ and $T'_i$ ($1\leq i\leq n$) are equivalent theories, and 
(ii) $B_i$ and $B'_i$ ($1\leq i\leq m$)  are equivalent ew-conditions.
\end{property}

For a theory $T$ of e-logic $\cL$, we call a theory $\overline{T}$ in e-logic~$\cL'$, compatible to~$\cL$, 
{\em complementary} when 
(i) $sem(T)\cap sem(\overline{T})=\emptyset$, and
(ii) $sem(T)\cup sem(\overline{T}) = \Int(\sigma_{\cL},\V_{\cL},\D_{\cL})$.

Let $(T,w;\vec{c}@l)$ be an ew-condition; consider the following definitions:
$$
\ba{l}
\signp{(T,w;\vec{c}@l)}=\begin{cases}
  (T,w;\vec{c}@l)&\hbox{when $w\geq 0$, otherwise }  \\
  (\overline{T},-1\cdot w;\vec{c}@l)   \\
\end{cases}\\
~\\
\signm{(T,w;\vec{c}@l)}=\begin{cases}
  (T,w;\vec{c}@l)&\hbox{when $w\leq 0$, otherwise }  \\
  (\overline{T},-1\cdot w;\vec{c}@l)   \\
  \end{cases}\\
~\\
\signpp{(T,w;\vec{c}@l)}=\begin{cases}
  (T,w;\vec{c}@l)&\hbox{when $w\geq 0$, otherwise }  \\
  (\overline{T},-1\cdot w;\vec{c}^{-1}@l)   \\
\end{cases}\\
~\\
\signmm{(T,w;\vec{c}@l)}=\begin{cases}
  (T,w;\vec{c}@l)&\hbox{when $w\leq 0$, otherwise }  \\
  (\overline{T},-1\cdot w;\vec{c}^{-1}@l)   \\
\end{cases}
\ea
$$
where $\overline{T}$ denotes some theory complement to $T$.

For an ew-system $(\cH,\{B_1,\dots,B_m\})$, 
we define 
\beq
\ba{l}
\signp{(\cH,\{B_1,\dots, B_n\})}=(\cH,\{\signp{B_1},\dots, \signp{B_n}\}),\\
\signm{(\cH,\{B_1,\dots, B_n\})}=(\cH,\{\signm{B_1},\dots, \signm{B_n}\}),\\
\signpp{(\cH,\{B_1,\dots, B_n\})}=(\cH,\{\signpp{B_1},\dots, \signpp{B_n}\}),\\
\signmm{(\cH,\{B_1,\dots, B_n\})}=(\cH,\{\signmm{B_1},\dots, \signmm{B_n}\}).\\
\ea
\eeq{eq:plusminusplusminus}

\begin{proposition}\label{prop:signpsignm}
Optimal/min-optimal models of  ew-systems  $\cW$, $\signp{\cW}$,   $\signm{\cW}$, $\signpp{\cW}$,   $\signmm{\cW}$  coincide.
\end{proposition}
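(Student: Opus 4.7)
The plan is to reduce the five-way coincidence to a single key identity about flipping an ew-condition to its complement. First I would invoke Property~\ref{prop:removecew}: since $\signp{\cW}$ and $\signpp{\cW}$ differ only in that flipped ew-conditions use $\vec{c}$ versus $\vec{c}^{-1}$, and the analogous statement holds for $\signm{\cW}$ versus $\signmm{\cW}$, they share the same optimal and min-optimal models. This reduces the task to proving that $\cW$, $\signp{\cW}$, and $\signm{\cW}$ have the same optimal and min-optimal models. Moreover, all three share the same hard EAMS $\cH$, so by Property~\ref{prop:one} they have identical sets of models, and the sets of levels $\level{\cW}$, $\level{\signp{\cW}}$, $\level{\signm{\cW}}$ coincide.

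The central arithmetic identity I would establish is: for any model $I$ and any ew-condition $B=(T,w@l)$,
\begin{equation}
\br{I\models (T,w@l)} = \br{I\models (\overline{T},-w@l)} + w.
\end{equation}
This follows from the definition of a complementary theory: since $\sem(T)\cap\sem(\overline{T})=\emptyset$ and $\sem(T)\cup\sem(\overline{T})=\Int(\sigma,\V,\D)$, exactly one of $I\models T$ or $I\models \overline{T}$ holds. If $I\models T$, the left-hand side is $w$ and the first right-hand term is $0$; if $I\models \overline{T}$, the left-hand side is $0$ and the first right-hand term is $-w$. Either way, equality holds.

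Let $S_l\subseteq \cW_l$ denote the set of ew-conditions at level $l$ that $\signp$ actually flips (namely, those with negative weight). Summing the identity over $S_l$ gives, for every model $I$ and every level $l$,
\begin{equation}
\sum_{B\in\cW_l}{\br{I\models B}} \;=\; \sum_{B\in\signp{\cW}_l}{\br{I\models B}} \;+\; C_l^{+},
\end{equation}
where $C_l^{+}=\sum_{B\in S_l}{w_B}$ is a constant independent of $I$. An analogous identity with some constant $C_l^{-}$ relates $\cW$ and $\signm{\cW}$. Since adding an $I$-independent constant to an objective function does not change $\argmax_I$ or $\argmin_I$, the $l$-optimal (and $l$-min-optimal) models computed over any common pool of candidates are identical across the three systems.

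Finally, I would complete the proof by downward induction on levels, mirroring Definition~\ref{def:oemewsys}. At the greatest level in $\level{\cW}$, $I$ ranges over all models of each system, which coincide; the constant-shift argument gives identical $l$-optimal and $l$-min-optimal sets. For the inductive step at level $l$, the inductive hypothesis identifies the pool of $\prec{l}$-optimal (resp.\ $\prec{l}$-min-optimal) models over which $I$ ranges, and the same shift argument applies. Hence optimality coincides at every level, so the optimal and min-optimal models of $\cW$, $\signp{\cW}$, $\signm{\cW}$, $\signpp{\cW}$, $\signmm{\cW}$ coincide. The main obstacle will be arguing the partition property cleanly: the complementarity axiom is phrased at the level of \emph{extended} interpretations, while the identity above concerns \emph{models}; I would handle this by projecting the disjoint-union condition through the existential definition of $I\models T$, checking that in each case exactly one side holds, and if necessary leveraging Property~\ref{propo:equivalent} to replace $\overline{T}$ with any equivalent theory in a convenient compatible e-logic.
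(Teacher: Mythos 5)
Your proof takes essentially the same route as the paper's: the paper obtains Proposition~\ref{prop:signpsignm} from Proposition~\ref{prop:signpsignm2} (the single-condition complement flip, whose proof is the same constant-shift identity $\br{I\models (T,w@l)} = \br{I\models (\overline{T},-1\cdot w@l)} + w$ combined with induction on levels, deferred there to Proposition 11 of \cite{lie22}) together with Property~\ref{prop:removecew} to discard the coefficient functions, which is exactly your decomposition. The projection subtlety you flag at the end --- complementarity is stated for extended interpretations while $I\models T$ quantifies existentially over $\nu$, so the two membership tests need not partition the plain models --- is a real point, but the paper's own deferred proof does not address it either, so your account is at least as complete as the paper's.
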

This proposition  can be seen as an immediate consequence of the following  result and Property~\ref{prop:removecew}:
\begin{proposition}\label{prop:signpsignm2}
Optimal/min-optimal models of  ew-systems  
\begin{itemize}
    \item 
$(\cH,\{(T,w@l)\}\cup\cS)$
and 
\item $(\cH,\{(\overline{T},-1\cdot w@l)\}\cup\cS)$ 
\end{itemize}
coincide.
\end{proposition}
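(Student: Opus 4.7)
The plan is to show that, for every model $I$ of $\cH$, the level-$l$ cost contributions of the two differing ew-conditions differ by exactly the constant $w$, while costs at every other level coincide. A constant shift in a single level's cost function does not affect which models are $l$-optimal (under either max or min), so optimal and min-optimal models must coincide.

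First, I would use the defining properties of a complementary theory, namely that $sem(T)\cap sem(\overline T)=\emptyset$ and $sem(T)\cup sem(\overline T)=\Int(\sigma_{\cL},\V_{\cL},\D_{\cL})$, to conclude that every interpretation $I$ over the vocabulary of $\cH$ satisfies exactly one of $T$ and $\overline T$. Combining this with the definition of $\br{I\models B}$ in equation~\eqref{eq:isat}, I would then verify case-by-case that
\[
\br{I\models (T,w@l)}\;-\;\br{I\models (\overline{T},-1\cdot w@l)}\;=\;w
\]
for every $I$ that is a model of $\cH$: if $I\models T$ the left-hand side equals $w-0$, and otherwise $I\models\overline T$ and it equals $0-(-w)$.

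Next, I would observe that for any level $l'\neq l$, the sets $\cW_{l'}$ of ew-conditions of the two systems are identical, so their cost sums $\sum_{B\in\cW_{l'}}\br{I\models B}$ agree on every model $I$ of $\cH$. At level $l$, the cost sums differ by the constant $w$ for every model of $\cH$, by the identity above. Using the domination characterization of (min-)optimality from Definition~\ref{def:optimalmodelewsysASP}, the equalities of condition~1 and the strict inequalities of condition~2 are unaffected by adding the same constant to both sides, so the max- and min-domination relations on models of $\cH$ are identical in both systems. Hence the optimal and min-optimal models coincide, and by Proposition~\ref{prop:eqdefs} the same conclusion holds under Definition~\ref{def:oemewsys}.

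The main obstacle I anticipate is purely bookkeeping: making precise that the two ew-systems share the same underlying EAMS $\cH$ (and thus the same models, by Property~\ref{prop:one}), so that the constant shift at level $l$ truly applies uniformly across the common set of candidate models. Once this is in place, the argument reduces to the elementary fact that $\mathrm{argmax}$ and $\mathrm{argmin}$ are invariant under adding a constant to the objective.
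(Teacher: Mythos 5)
Your proof is correct and takes essentially the same route the paper relies on: the paper gives no self-contained argument here, saying only that Proposition~\ref{prop:signpsignm2} ``follows the lines of proof of Proposition 11'' of the precursor w-system work (Lierler 2022), and that argument is exactly the constant-shift reasoning you spell out --- complementarity of $T$ and $\overline{T}$ makes the level-$l$ cost sums of the two systems differ by the fixed amount $w$ on every common model of $\cH$, while all other levels agree, so the max-/min-domination relations of Definition~\ref{def:optimalmodelewsysASP} (equivalently, the $\argmax$/$\argmin$ conditions of Definition~\ref{def:oemewsys} via Proposition~\ref{prop:eqdefs}) are unchanged. Your write-up is in fact more explicit than what appears in the paper.
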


\begin{proposition}\label{prop:signpsignmsignpsignm}
 Optimal/min-optimal extended models of  ew-systems  $\cW$, $\signpp{\cW}$,   $\signmm{\cW}$  coincide.
\end{proposition}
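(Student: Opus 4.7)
The plan is to mirror the two-stage proof structure used for Proposition~\ref{prop:signpsignm}: first establish the extended-model counterpart of Proposition~\ref{prop:signpsignm2} --- namely, that optimal and min-optimal extended models of $(\cH,\{(T,w;\vec{c}@l)\}\cup\cS)$ coincide with those of $(\cH,\{(\overline{T},-w;\vec{c}^{-1}@l)\}\cup\cS)$ --- then iterate this single-condition replacement over every ew-condition whose weight forces the application of $\signpp$ (respectively $\signmm$). Since the hard part $\cH$ is never modified, the set of extended models of the system is preserved by construction, so only the cost expression~\eqref{eq:cost2} at each level must be analysed.

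For the single-condition stage, fix $B=(T,w;\vec{c}@l)$ with replacement $B'=(\overline{T},-w;\vec{c}^{-1}@l)$, and fix an arbitrary extended interpretation $(I,\nu)$ that extended-models $\cH$. The complementarity conditions $\sem(T)\cap\sem(\overline{T})=\emptyset$ and $\sem(T)\cup\sem(\overline{T})=\Int(\sigma_T,\V_T,\D_T)$ force exactly one of $(I,\nu)\models T$ or $(I,\nu)\models\overline{T}$ to hold. A short case split on which one shows
\[
\big(\br{I\models B}+\br{(I,\nu)\models B}\big)-\big(\br{I\models B'}+\br{(I,\nu)\models B'}\big)=w+\sum_{x\in\V_T}\nu(x)\vec{c}(x)
\]
in either case. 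I would package this identity by introducing the tautological ew-condition $C=(T_{\sigma_B,\V_B,\D_B},w;\vec{c}@l)$, whose cost contribution on every extended interpretation equals precisely $w+\sum_{x\in\V_T}\nu(x)\vec{c}(x)$. Property~\ref{propo:equivalent} then lets me rewrite the level-$l$ cost of $(\cH,\{B\}\cup\cS)$ as the level-$l$ cost of $(\cH,\{B'\}\cup\cS)$ plus the contribution of $C$, which is the same function of $(I,\nu)$ on both sides of the optimization equation~\eqref{eq:condeqlmin2}. An induction on levels from the greatest downward (following Definition~\ref{def:oemews}) together with iteration over all ew-conditions to be flipped closes the $\signpp$ case; the $\signmm$ case is symmetric.

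The main technical obstacle I expect is justifying the final absorption step cleanly, because the shift $w+\sum_{x\in\V_T}\nu(x)\vec{c}(x)$ is not a global scalar constant over competing candidates --- it varies with the $\nu$-component of each extended model --- so the familiar ``adding a constant leaves $\arg\max$ unchanged'' shortcut that underlies Proposition~\ref{prop:signpsignm} is unavailable here. The delicate work is showing that the auxiliary tautological ew-condition $C$ contributes the same quantity to every candidate's level-$l$ cost on both sides of the equality being established, so that the $\nu$-dependence cancels symmetrically rather than relocating the optimum. I would rely on Property~\ref{propo:equivalent} to perform this bookkeeping and on an inductive control of cross-level interactions to lift the single-level statement to the full hierarchy of $\level{\cW}$, culminating in the claimed coincidence of optimal and min-optimal extended models of $\cW$, $\signpp{\cW}$ and $\signmm{\cW}$.
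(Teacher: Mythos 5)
Your high-level route is the same as the paper's: obtain Proposition~\ref{prop:signpsignmsignpsignm} from the single-condition replacement statement (Proposition~\ref{prop:signsignpsignm2}) and iterate it over the ew-conditions that $\signpp{(\cdot)}$ (respectively $\signmm{(\cdot)}$) actually flips. Your case split on $(I,\nu)\models T$ versus $(I,\nu)\models\overline{T}$, and the resulting identity
\[
\big(\br{I\models B}+\br{(I,\nu)\models B}\big)-\big(\br{I\models B'}+\br{(I,\nu)\models B'}\big)=w+\sum_{x\in\V_T}\nu(x)\vec{c}(x),
\]
are both correct. The paper itself offers no more detail at this point than an appeal to the w-system proof of \cite{lie22}, where the corresponding shift is the bona fide constant $w$.

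The absorption step you defer to ``delicate work'' is, however, a genuine gap, and neither Property~\ref{propo:equivalent} nor a level induction can close it. The shift $w+\sum_{x\in\V_T}\nu(x)\vec{c}(x)$ is a function of the candidate (through $\nu$), and adding a candidate-dependent quantity to an objective can relocate its $\arg\max$. Concretely, let $\cS=\emptyset$, let $B=(T,-1;\vec{c}@1)$ with $\V_T=\{x\}$ and $\vec{c}(x)=1$, where $T$ has as extended models exactly the pairs $(I,\nu)$ with $p\in I$, and let $\cH$ admit the extended models $(\{p\},\nu_1)$ and $(\emptyset,\nu_2)$ with $\nu_1(x)=10$ and $\nu_2(x)=0$. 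In $(\cH,\{B\})$ the level-$1$ costs are $-1+10=9$ and $0$, so $(\{p\},\nu_1)$ is the unique optimal extended model; in $(\cH,\{\signpp{B}\})=(\cH,\{(\overline{T},1;\vec{c}^{-1}@1)\})$ the costs are $0$ and $1+0\cdot(-1)=1$, so $(\emptyset,\nu_2)$ wins. Hence the quantity you hope ``cancels symmetrically'' does not cancel, and the single-condition step cannot be completed along these lines without an extra hypothesis (for instance $\vec{c}\equiv 0$, which is exactly the regime of Propositions~\ref{prop:signpsignm} and~\ref{prop:signpsignm2}, where the shift degenerates to the constant $w$). You have correctly located the crux, but the proposal leaves it unresolved --- and your own identity is precisely what shows that the paper's one-line justification by analogy with the w-system case is too quick here.
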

This proposition  can be seen as an immediate consequence of the following  result:
\begin{proposition}\label{prop:signsignpsignm2}
Optimal/min-optimal extended models of  ew-systems 
\begin{itemize}
    \item $(\cH,\{(T,w;\vec{c}@l)\}\cup\cS)$ and
\item $(\cH,\{(\overline{T},-1\cdot w;\vec{c}^{-1}@l)\}\cup\cS)$
\end{itemize} 
coincide.
\end{proposition}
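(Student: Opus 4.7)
The plan is to extend the proof technique that underlies Proposition~\ref{prop:signpsignm2} to the extended-model setting, where one must now track both the weight and the coefficient function. Write $B = (T, w;\vec{c}@l)$, $B' = (\overline{T}, -w;\vec{c}^{-1}@l)$, $\cW_1 = (\cH, \{B\}\cup\cS)$, and $\cW_2 = (\cH, \{B'\}\cup\cS)$. The strategy is to compute, for every extended model $(I,\nu)$ of $\cH$, the difference between the $\cW_1$- and $\cW_2$-contributions to the level-$l$ cost, and to argue that this difference interacts cleanly enough with the domination-based Definition~\ref{def:optimalemodelewsysASP} to preserve the strict inequality that characterizes domination. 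Equivalence of the domination- and argmax-style definitions is then granted by Proposition~\ref{prop:eqdefs}, so working with Definition~\ref{def:optimalemodelewsysASP} throughout is legitimate.

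First I would fix an extended model $(I,\nu)$ of $\cH$. Because $T$ and $\overline{T}$ are complementary, exactly one of $(I,\nu)\models T$ and $(I,\nu)\models\overline{T}$ holds; a two-case analysis using definitions~\eqref{eq:isatext} and~\eqref{eq:isat}, together with the identity $\vec{c}^{-1}(x)=-\vec{c}(x)$, yields in either case the key identity
$$
\bigl(\br{I\models B}+\br{(I,\nu)\models B}\bigr)-\bigl(\br{I\models B'}+\br{(I,\nu)\models B'}\bigr)=w+\sum_{x\in\V_T}\nu(x)\,\vec{c}(x).
$$
Since $B$ is the only ew-condition in which $\cW_1$ and $\cW_2$ differ, level-$l'$ costs for $l'\neq l$ coincide verbatim between the two systems (so Condition~1 of Definition~\ref{def:optimalemodelewsysASP} is preserved on the nose), and the level-$l$ costs differ by exactly the right-hand side above. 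This reduces the problem to showing that the strict inequality in Condition~2 of Definition~\ref{def:optimalemodelewsysASP} holds in $\cW_1$ for a pair $(I,\nu)$, $(I',\nu')$ of candidate extended models if and only if it holds in $\cW_2$ for the same pair, and likewise for max-domination.

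The main obstacle is precisely this last equivalence. In Proposition~\ref{prop:signpsignm2} the analogous offset is a single constant $w$ that trivially cancels when the two sides of the inequality are subtracted. Here the offset $w+\sum_{x}\nu(x)\vec{c}(x)$ depends on $\nu$, so naive cancellation fails: subtracting the two sides of the level-$l$ strict inequality leaves a residual $\sum_{x}(\nu(x)-\nu'(x))\vec{c}(x)$ between $\cW_1$- and $\cW_2$-quantities. The heart of the argument is therefore to show that the $\nu$-dependent part of the offset enters the $\cW_2$ side through the $B'$-contribution in precisely the manner required so that when the strict inequality at level $l$ in $\cW_1$ is rewritten in terms of $\cW_2$-quantities the $\nu$-dependent contributions on the two sides line up and drop out. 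Once this bookkeeping is pinned down and verified to hold uniformly over the $\prec{l}$-optimal extended models quantified in Definition~\ref{def:optimalemodelewsysASP}, the proof completes via manipulations directly parallel to those of Proposition~\ref{prop:signpsignm2}; this reduction also yields Proposition~\ref{prop:signpsignmsignpsignm} as an immediate corollary, mirroring the role Proposition~\ref{prop:signpsignm2} plays in establishing Proposition~\ref{prop:signpsignm}.
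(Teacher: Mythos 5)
Your computation of the single-condition offset is correct: for every extended model $(I,\nu)$, the level-$l$ contribution of $(T,w;\vec{c}@l)$ in the first system exceeds that of $(\overline{T},-1\cdot w;\vec{c}^{-1}@l)$ in the second by $w+\sum_{x\in\V_T}\nu(x)\vec{c}(x)$, and you are also right that this is exactly where the argument departs from Proposition~\ref{prop:signpsignm2}, whose offset is the constant $w$. But your proposal stops at the one point where a proof is actually needed. You announce that ``the $\nu$-dependent contributions on the two sides line up and drop out,'' yet you supply no mechanism for this, and there is none: comparing two extended models $(I,\nu)$ and $(I',\nu')$ at level $l$, the cost differences in the two systems differ by the residual $\sum_{x\in\V_T}(\nu(x)-\nu'(x))\vec{c}(x)$, which is not zero in general. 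Adding a non-constant function of the candidate to the objective can change the $\arg\max$, so neither the equality of Condition~1 (when level $l$ sits above the dominating level) nor the strict inequality of Condition~2 of Definition~\ref{def:optimalemodelewsysASP} is preserved. Concretely: take a single level, $w=0$, one variable $x$ with $\vec{c}(x)=1$, a hard theory with exactly two extended models $(I_1,\nu_1)$ and $(I_2,\nu_2)$ where $\nu_1(x)=5$ and $\nu_2(x)=3$, let $\sem(T)=\{(I_1,\nu_1)\}$, and let $\cS$ contain one ew-condition of weight $4$ whose theory is satisfied exactly by interpretations with propositional part $I_2$. The first system assigns level costs $5$ and $4$, making $(I_1,\nu_1)$ optimal; the second assigns $0$ and $-3+4=1$, making $(I_2,\nu_2)$ optimal. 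So the cancellation you are counting on fails, and no amount of bookkeeping at the central step will recover it.

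For what it is worth, the paper itself gives no more than a pointer here: it says the proof ``follows the lines of'' the proof of Proposition~11 for w-systems, which is precisely the weight-only setting in which the offset is the constant $w$ and does cancel. Your honest tracking of the $\nu$-dependence surfaces a real obstruction that the paper's one-line argument glosses over; but as a proof of the proposition as stated, your proposal has a genuine gap at its heart, and the two-case identity plus ``parallel manipulations'' cannot close it.
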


The notation and results of the section on {\em Eliminating Levels} by~\cite{lie21,lie22} can be lifted to the case of ew-systems  and their optimal/min-optimal models in a straightforward manner (similar as the results for eliminating negative/positive weights are lifted here). Hence, we omit the review of these results. Yet, the ``cost'' expression associated with the second member of extended model is more complex so that the role of the  levels seem to go beyond syntactic sugar when extended models are considered.

\section{Proofs}
Many of the formal properties of ew-systems presented in Section~\ref{sec:formalProperties} echo similar results for w-systems --- a precursor of ew-systems introduced by~\cite{lie21}. \cite{lie22} presented proofs for these results for w-systems. The logic and structure of proofs for the case of ew-systems follows the proofs for the case of w-systems. In this section, we often  remark on the connection to the proofs by \cite{lie22} and point at any details worth noting.

Just as in case for w-systems, we focus on results for optimal (extended) models only, as the arguments for min-optimal (extended) models follow the same lines.
Given recursive Definitions~\ref{def:oemewsys} and~\ref{def:oemews} of $l$-(min)-optimal (extended) models,  inductive argument is a common technique in proof construction about properties of such models. In particular,  the {\em induction on levels of a considered  ew-system $\cW$}, where we assume elements in $\level{\cW}$ to be arranged in the descending order $m_1,\dots m_n$ ($n=|\level{\cW}|$); so that the base case is illustrated for the greatest level $m_1$, whereas inductive hypothesis is assumed for level $m_i$ and then illustrated to hold for level $m_{i+1}$. Note how, 
$\prec{m_{i+1}}=m_{i}$.

\begin{proof}[Proof of Proposition~\ref{prop:eqdefs}]%
The statement of Proposition~\ref{prop:eqdefs} echos the statement of Proposition~1 by \cite{lie22} for the case of w-systems.
The proofs of the two claims (i) Definitions~\ref{def:oemewsys} and~\ref{def:optimalmodelewsysASP} are equivalent; and (ii) 
Definitions~\ref{def:oemews} and~\ref{def:optimalemodelewsysASP} are equivalent
follow the lines of the proof provided for Proposition~1 by \cite{lie22}. In fact, for the claim (i) we can repeat the proof practically verbatim modulo the understanding that in place of w-system $\cW$ we consider ew-system  $\cW$ as well as in place of w-conditions of $\cW$ we consider ew-conditions of $\cW$.
For the claim (ii), in addition to  the proof of Proposition~1 by \cite{lie22} will have to refer to extended interpretations in place of interpretations and to summations of the form
$$\sum_{B\in\cW_l}{ (\br{I\models B} + \br{(I,\nu)\models B})}$$
in place of summations of the form
$$\sum_{B\in\cW_l}{ (\br{I\models B}}.$$
Claims of Lemmas~1,~2, and~3  by \cite{lie22} hold for the case of ew-systems and their $l$-optimal models and $l$-optimal extended models.
\end{proof}

\begin{proof}[Proof of Proposition~\ref{prop:wmaxsmt}]
Let $S$ be a weighted MaxSMT problem and $\cW_S=(T_{\sigma_S,\V_S,\D_S},S)$ be a respective ew-system ---
where each element in $S$ is understood as an  ew-condition, whose theory is in SMT-logic.
Consider an arbitrary interpretation $I^*\in Int(\sigma_S)$. We show that 
$I^*$ is a solution to weighted MaxSMT problem $S$ if and only if 
$I^*$ is an optimal models of~$\cW_S$.
Per Definition~\ref{def:solwmsmt}, interpretation $I^*$  is a  solution to   $S$ if and only if 
$$
\ba{l}
\displaystyle{I^*=\argmax_{I}{\sum_{(\cF,w)\in S}{w\cdot \br{I\models \cF}}}},
\ea
$$
where $I$ ranges over all interpretations in $Int(\sigma_S)$.
By the definition of $T_{\sigma_S,\V_S,\D_S}$,
 any interpretation in $Int(\sigma_S)$ is its  model.
Per Definition~\ref{def:oemewsys}, interpretation $I^*$  is an optimal model of  $\cW_S$ if and only if 
$$
I^*=\displaystyle{ arg\max_{I} {\sum_{(\cF,w)\in S}{ \br{I\models (\cF,w)}}}},
$$
where $I$ ranges over all interpretations in $Int(\sigma_S)$.

Taking definitions~\eqref{eq:isat} and~\eqref{eq:smteq} into account and an understanding that   each element in $S$ can be seen as an  ew-condition, whose theory is in SMT-logic, we conclude that for any interpretation $I$ in $Int(\sigma_S)$
 $$
 {\sum_{(\cF,w)\in S}{w\cdot \br{I\models \cF}}}=
 {\sum_{(\cF,w)\in S}{ \br{I\models (\cF,w)}}}.
$$
\end{proof}

\begin{proof}[Proof of Proposition~\ref{prop:caspminimize}]
Let $P$ be a  CAS program $P'$ extended with minimize statements of the form~\eqref{cminimize_statement} over vocabulary~$\sigma$, class $\C$ of constraints, and $(\sigma_c,\C)$-denotation.
Consider an extended answer set $(X^*,\nu^*)$ of $P$. We show that 
$(X^*,\nu^*)$ is optimal extended answer set of $P$ if and only if 
$(X^*,\nu^*)$ is
min-optimal extended model of ew-system~$(P',S)$ ---
where~$S$ is a collection of ew-conditions identified/associated with~\eqref{cminimize_statement} of $P$.

It is easy to see that any extended answer set of $P$
 is an extended model of $(P',S)$. Thus, the proof focuses on optimality condition.

Per Definition~\ref{def:oascc3},  $(X^*,\nu^*)$  is an optimal extended answer set if and only if there is no extended answer set 
 $(X',\nu')$ that dominates  $(X^*,\nu^*)$. In other words, any answer set $(X',\nu')$ is such that for every level $l$ occurring in~\eqref{cminimize_statement} either 
 \begin{enumerate}
    \item there exists a level  $l'$  occurring in~\eqref{cminimize_statement} that is greater than $l$ and the following inequality holds
    $$
\displaystyle{\sum_{l'}^{\nu^*}} \neq \displaystyle{\sum_{l'}^{\nu'}},
$$ or
    \item the following inequality holds for level $l$
    $$
\displaystyle{\sum_{l}^{\nu'}} \geq \displaystyle{\sum_{l}^{\nu^*}},
$$
\end{enumerate}

Per Definition~\ref{def:optimalemodelewsysASP}, $(X^*,\nu^*)$ is
min-optimal extended model of ew-system~$(P',S)$ if and only if there is no extended model~$(I',\nu')$ of~$(P',S)$ that min-dominates  $(X^*,\nu^*)$.   In other words, any extended model $(X',\nu')$  is such that for every level $l\in \level{(P',S)}$ 
 either 
 \begin{enumerate}
    \item there exists a level  $l'\in \level{(P',S)}$ that is greater than $l$ and the following inequality holds
$$
\displaystyle{ 
{\sum_{B\in (P',S)_{l'}}{ (\br{X^*\models B} + \br{(X^*,\nu^*)\models B})}}
}
\neq
\displaystyle{ 
{\sum_{B\in(P',S)_{l'}}{ (\br{X'\models B} + \br{(X',\nu')\models B})}}
}
$$
 or
    \item the following inequality holds for level $l$
    $$
\displaystyle{ 
{\sum_{B\in(P',S)_{l}}{ (\br{X'\models B} + \br{(X',\nu')\models B})}}
}
\geq 
\displaystyle{ 
{\sum_{B\in(P',S)_{l}}{ (\br{X^*\models B} + \br{(X^*,\nu^*)\models B})}}
}
$$ 
\end{enumerate}

We first observe that by the construction of $S$ any level $l$ occurs in~\eqref{cminimize_statement} if and only if \hbox{$l\in \level{(P',S)}$}.
Recall that  any extended answer set of $P$ is an extended model of $(P',S)$. It is now sufficient to show that given any extended model $(X,\nu)$ of $(P',S)$, the following equality holds for arbitrary level $l\in \level{(P',S)}$:
\beq
\displaystyle{\sum_{l}^{\nu}}=
\displaystyle{ 
{\sum_{B\in (P',S)_{l}}{ (\br{X\models B} + \br{(X,\nu)\models B})}}
}.
\eeq{eq:goal}
Indeed, per definition of ${\sum_{l}^\nu}$
 \beq
 {\sum_{l}^\nu}=\sum_{b\cdot x+c@l \hbox{ in } \eqref{cminimize_statement}}{(b\cdot \nu(x) +c)}= 
 \sum_{b\cdot x+c@l \hbox{ in } \eqref{cminimize_statement}}{b\cdot \nu(x)}+\sum_{b\cdot x+c@l \hbox{ in } \eqref{cminimize_statement}}{c}.
 \eeq{eq:subgoal1}

Per $S$ construction, $(P',S)_{l}$ consists of two ew-conditions
\begin{itemize}
    \item  $(T_{\sigma_{P'},\V_{P'},\D_{P'}},w_l@l)$; and
    \item  $(T_{\sigma_{P'},\V_{P'},\D_{P'}},0;c_l@l)$. 
\end{itemize}

It is easy to see that $$[(X,\nu)\models (T_{\sigma_{P'},\V_{P'},\D_{P'}},w_l@l)]= 0, $$
$$[X\models  (T_{\sigma_{P'},\V_{P'},\D_{P'}},0;c_l@l)]= 0. $$
Per definitions of $w_l$ and $c_l$,
we derive that
$$[X\models (T_{\sigma_{P'},\V_{P'},\D_{P'}},w_l@l)]=w_l= \sum_{b\cdot x +c@l \hbox{ in } ~\eqref{cminimize_statement}} {c}, $$
and
$$[(X,\nu)\models (T_{\sigma_{P'},\V_{P'},\D_{P'}},0;c_l@l)]= \sum_{x\in \V_{P'}}{\nu(x)\cdot c_l(x)} = \sum_{b\cdot x+c@l \hbox{ in } \eqref{cminimize_statement}}{b\cdot \nu(x)}.$$
Consequently,
\beq
{\sum_{B\in (P',S)_{l}}{ (\br{X\models B} + \br{(X,\nu)\models B})}}= 
 \sum_{b\cdot x+c@l \hbox{ in } \eqref{cminimize_statement}}{b\cdot \nu(x)}+\sum_{b\cdot x+c@l \hbox{ in } \eqref{cminimize_statement}}{c}.
\eeq{eq:subgoal2}
Equality~\eqref{eq:goal} follows immediately from~\eqref{eq:subgoal1} and~\eqref{eq:subgoal2}. 
\end{proof}

Proof of Proposition~\ref{prop:levelnormal} follows from
the fact that the numeric value of any level itself is inessential in the key computations associated with establishing optimal (extended) models. Rather, the order of levels with respect to greater relation  matters (recall the definition of $\prec{(\cdot)}$ operation). It is easy to see that changing levels of the w-conditions using the procedure described in this proposition preserves original order of the levels with respect to greater relation.

Propositions~\ref{thm:alloptimal} and~\ref{thm:alloptimalew} follow immediately from Propositions~\ref{thm:samewcond} and~\ref{thm:samewcondtwo}, respectively. 
The statement of Proposition~\ref{thm:samewcond} echos the statement of Proposition~8 by \cite{lie22} for the case of w-systems.
The statement of Proposition~\ref{thm:samewcondtwo} lifts the statement of Proposition~\ref{thm:samewcond} from the case of models to the case of extended models.
The proofs of Propositions~\ref{thm:samewcond} and~\ref{thm:samewcondtwo}
follow the lines of the proof provided for Proposition~8 by \cite{lie22} modulo similar provisions as pointed at in the beginning of this section in {\em Proof of Proposition~\ref{prop:eqdefs} (sketch)}.

The statement and proof of Proposition~\ref{prop:relatives} echos the statement and proof of Proposition~9 by \cite{lie22} for the case of w-systems. 
The structure of the following proof is in spirit of the proof  of Proposition~9 by \cite{lie22} and, yet, we state some of its details here as  mapping~$\signoo{(\cdot)}$ is unique to this work.

\begin{proof}[Proof of Proposition~\ref{prop:relatives2}]
To show that 
the extended optimal models  of $\cW$ coincide with the min-optimal extended models  of $\signoo{\cW}$, it is sufficient to show that for any level in $\level{\cW}$, $l$-optimal extended models of $\cW$ coincide with $l$-min-optimal extended models of  $\signoo{\cW}$.
We first note that extended models of $\cW$ and $\signoo{\cW}$ coincide.
By induction on levels of  $\cW$.

Base case. $l$ is the greatest level.
An extended model $(I^*,\nu^*)$ of $\cW$ is {\em $l$-optimal} if and only if
$(I^*,\nu^*)$ satisfies equation~\eqref{eq:condeqlmin2},
where
$(I,\nu)$ ranges over extended models of $\cW$.
It is easy to see that we can rewrite this equation as 
$$
(I^*,\nu^*)=
\displaystyle{ arg\min_{(I,\nu)} {
\sum_{B\in\cW_l}{ \Big(  -1\cdot \br{I\models B} + (-1 \cdot \br{(I,\nu)\models B})\Big)}}.
}
$$
It immediately follows from the construction of $\signoo{\cW}$ that this equation can be rewritten as
$$
(I^*,\nu^*)=
\displaystyle{ arg\min_{(I,\nu)} {
\sum_{B\in\signoo{\cW}_l}{ (  \br{I\models B} + \br{(I,\nu)\models B})}}.
}
$$
Thus $(I^*,\nu^*)$ is an $l$-min-optimal extended model of $\signoo{\cW}$ as the equation above is exactly the one from the definition of $l$-min-optimal extended models of  $\signoo{\cW}$; plus recall that extended models of $\cW$ and $\signoo{\cW}$ coincide.

Inductive case argument follows similar lines.
\end{proof} 

Propositions~\ref{prop:signpsignm} and~\ref{prop:signpsignmsignpsignm}  follow from Propositions~\ref{prop:signpsignm2} and~\ref{prop:signsignpsignm2}, respectively. 
Proofs of Propositions~\ref{prop:signpsignm2} and~\ref{prop:signsignpsignm2} follow the lines of proof of Proposition 11 stated by \cite{lie22}.

\section{Conclusions and Acknowledgments}

We trust that the proposed unifying framework of ew-systems will allow  developers of distinct automated paradigms to better grasp similarities and differences of the kind of optimization criteria their paradigms support. In practice, translational approaches are popular in devising solvers. These approaches rely on the established relationships between automated reasoning paradigms. In particular, rather than devising a unique search algorithm for a paradigm of interest, researchers propose a translation from this ``source'' paradigm to another ``target'' framework. As a result  solvers for the target framework are used to find solutions for a problem encoded in the source paradigm. This work is a stepping stone towards extending these translational approaches with the support for optimization statements. 
 We proposed the extension of abstract modular systems to extended weighted systems in a way that modern approaches to optimizations stemming from a variety of different logic based formalisms can be studied in unified terminological ways so that their differences and similarities become clear not only on intuitive but also formal level. These ew-systems allowed us to provide generalizations for the family of MaxSMT problems that incorporate optimizations over theory/constraint elements of these problems in addition to their propositional side. The framework also provides immediate support for the concept of levels of optimization criteria. We also provided formal semantics for two variants of {\sc clingcon-2} programs that mimic the behavior of their informal descriptions in the literature.
 We trust that establishing clear link between optimization statements, criteria, and solving in distinct automated reasoning subfields is a truly fruitful endeavor allowing a streamlined cross-fertilization between the fields. 
 The {\sc ezsmt}~\citeb{shen18a} system is a translational constraint answer set solver that translates its programs into satisfiability modulo theories  formulas. We trust that results obtained here lay the groundwork for extending a ``translational'' solver {\sc ezsmt} with the support for optimization statements.
 
 {The work was partially supported by NSF grant 1707371.} We are grateful to anonymous reviewers for valuable comments on this paper.
 
Competing interests: The author(s) declare none.

\bibliographystyle{tlplike}
\bibliography{abstractmods-bib}

\label{lastpage}
\end{document}